\author{Rolf Jagerman}
\affiliation{%
  \institution{University of Amsterdam}
  \city{Amsterdam}
  \country{The Netherlands}
}
\email{rolf.jagerman@uva.nl}
\author{Maarten de Rijke}
\affiliation{
  \institution{University of Amsterdam \& Ahold Delhaize}
  \city{Amsterdam}
  \country{The Netherlands}
}
\email{m.derijke@uva.nl}
\keywords{Learning to Rank; Unbiased Learning; Counterfactual Learning}
\acrodef{LTR}{Learning to Rank}
\acrodef{IR}{Information Retrieval}
\acrodef{CLTR}{Counterfactual Learning to Rank}
\acrodef{CRM}{Counterfactual Risk Minimization}
\acrodef{IPS}{Inverse Propensity Scoring}
\acrodef{SGD}{Stochastic Gradient Descent}
\newcommand{\OurMethod}{\textsc{CounterSample}}
\newcommand{\Yahoo}{\texttt{Yahoo}}
\newcommand{\Istella}{\texttt{Istella-s}}
\newcommand{\IPSSGD}{IPS-SGD}
\newcommand{\BiasedSGD}{Biased-SGD}
\DeclarePairedDelimiter{\norm}{\lVert}{\rVert}
\DeclarePairedDelimiter{\iprod}{\langle}{\rangle}
\DeclarePairedDelimiter{\sbrackets}{[}{]}
\DeclarePairedDelimiter{\cbrackets}{\{}{\}}
\DeclarePairedDelimiter{\rbrackets}{(}{)}
\newcommand{\expectation}[1]{\mathbb{E}\sbrackets*{#1}}
\newcommand{\BigO}[1]{O\rbrackets*{#1}}
\newtheorem{theorem}{Theorem}
\title{Accelerated Convergence for Counterfactual Learning to Rank}
\begin{document}
\fancyhead{}

%!TEX root = ../sigir2020.tex

\begin{abstract}
Counterfactual \ac{LTR} algorithms learn a ranking model from logged user interactions, often collected using a production system.
Employing such an offline learning approach has many benefits compared to an online one, but it is challenging as user feedback often contains high levels of bias.
Unbiased \ac{LTR} uses \acf{IPS} to enable unbiased learning from logged user interactions.
One of the major difficulties in applying \ac{SGD} approaches to counterfactual learning problems is the large variance introduced by the propensity weights.
In this paper we show that the convergence rate of \ac{SGD} approaches with \ac{IPS}-weighted gradients suffers from the large variance introduced by the \ac{IPS} weights: convergence is slow, especially when there are large \ac{IPS} weights.

To overcome this limitation, we propose a novel learning algorithm, called \OurMethod, that  has provably better convergence than standard \ac{IPS}-weighted gradient descent methods.
We prove that \OurMethod{} converges faster and complement our theoretical findings with empirical results by performing extensive experimentation in a number of biased \ac{LTR} scenarios -- across optimizers, batch sizes, and different degrees of position bias.
\end{abstract}

\let\backupauthors\authors
\renewcommand{\authors}{Rolf Jagerman and Maarten de Rijke}

\maketitle

%!TEX root = ../sigir2020.tex

\section{Introduction}

\acf{LTR} from \emph{user interactions}~\citep{grotov2016online}, as opposed to learning from \emph{annotated datasets}~\citep{liu2009learning}, has seen increased research interest due to its immense practical value.
Learning from implicit feedback enjoys several advantages over learning from professional annotations:
\begin{enumerate*}
\item user interaction signals are available at large scale and cost much less than professional annotations,
\item implicit feedback captures the user's true interest more accurately, and
\item interaction data can be utilized in domains where professional annotations are impractical, unethical, or impossible, for example in personal search.
\end{enumerate*}
However, learning from user interactions is not without difficulty and one of the major challenges is the biased nature of user interactions~\cite{joachims2017accurately,craswell2008experimental}.
For example, in \ac{LTR}, one of the most important types of bias that affects user interaction data is \emph{position bias}, a phenomenon where users observe, and as a result click on, top-ranked items more than lower ranked ones.

Recent work has focused on removing bias by applying methods from counterfactual learning~\citep{joachims2017unbiased}.
Most notably, \acf{IPS} is commonly used to perform unbiased learning.
A widely used approach for unbiased learning is to treat inverse propensity scores as weights and solve a weighted optimization problem via \acf{SGD}~\cite{ai2018unbiased,joachims2018deep,agarwal2019estimating}.

A major challenge of learning with \ac{IPS}-weighted \ac{SGD} is that the propensity weights introduce a large amount of variance in the gradients.
This effect is especially severe in scenarios where the propensity weights can take on extreme values.
For example, in product search, the set of candidate results tends to be large~\cite{anwaar2019counterfactual} and the query distribution can be heavily skewed~\cite{hasan2011query} (e.g., due to periodic or seasonal influences), which means that some items get very little exposure and as a result have extreme \ac{IPS} weights.

In this paper we investigate the relationship between \ac{IPS} weights and the convergence rate of \ac{IPS}-weighted \ac{SGD}.
We prove that \ac{IPS}-weighted \ac{SGD} suffers from a slow convergence rate when the propensity weights are large.
We also argue that as long as stochastic gradients are scaled with \ac{IPS}-weights, this slowdown \emph{cannot} be improved.
This means that for many practical \ac{LTR} scenarios, learning a ranking model is inefficient and convergence is slow.

We overcome the above limitation with a novel sample-based learning algorithm, called \OurMethod{}, that samples learning instances proportional to their \ac{IPS} weight instead of weighting learning instances by their \ac{IPS} weight.
Because of this strategy, we are able to control the variance, which, in turn, leads to accelerated convergence.
We show that this new approach provably enjoys faster convergence while remaining unbiased and computationally cheap.
We complement these theoretical findings with extensive experiments.
\OurMethod{} consistently converges faster, and in some cases learns a better ranker than \ac{IPS}-weighted \ac{SGD}, in a number of biased \ac{LTR} scenarios -- across optimizers, across batch sizes and for different severities of position bias.

Our main contributions in this paper are:
\begin{itemize}[leftmargin=*,nosep]
\item We analyze the convergence rate of \ac{IPS}-weighted \ac{SGD} algorithms and formalize the relationship between \ac{IPS} weights and the convergence rate.
\item We introduce a novel learning algorithm for Counterfactual \acl{LTR} called \OurMethod{} that has provably faster convergence than \ac{IPS}-weighted \ac{SGD} approaches.
\item We empirically show that \OurMethod{} converges faster than competing methods in a number of biased \ac{LTR} scenarios.
\end{itemize}

\noindent%
The remainder of this paper is organized as follows.
We discuss related work in Section~\ref{sec:related-work}.
Section~\ref{sec:background} provides the necessary notation and background for Counterfactual \ac{LTR}.
We analyze the convergence rate of \ac{IPS}-weighted \ac{SGD} algorithms for \ac{LTR} in Section~\ref{sec:analysis}.
Then, we introduce \OurMethod{}, a method for Counterfactual \ac{LTR} with faster convergence in Section~\ref{sec:method}.
Sections~\ref{sec:setup} and~\ref{sec:results} describe the experimental setup and empirical results, respectively.
We conclude the paper in Section~\ref{sec:conclusion}.

%!TEX root = ../sigir2020.tex

\section{Related work}
\label{sec:related-work}

\subsection{Counterfactual \ac{LTR}}
Recent work on unbiased \acf{LTR} uses counterfactual learning~\cite{swaminathan2015counterfactual,swaminathan2015batch} to remove different types of bias such as position bias from click data to improve ranking performance~\cite{joachims2017unbiased,wang2016learning}.
These methods typically use \acf{IPS} to enable unbiased learning.
A popular approach for solving \ac{IPS}-weighted learning problems is to use \acf{SGD} algorithms~\cite{ai2018unbiased,joachims2018deep,agarwal2019estimating}.
Existing approaches accomplish this by scaling the loss (and as a result, the gradients) with \ac{IPS} weights.
Although the empirical success of such approaches has been well documented in the literature~\cite{agarwal2019estimating,ai2018unbiased,joachims2018deep}, the impact of \ac{IPS} weights on the \emph{convergence rate} of \ac{SGD} is not well understood.

In this paper we address this problem by investigating the relationship between \ac{IPS} weights and the convergence rate of \ac{SGD}.
Furthermore, we introduce a novel learning method that, unlike previous approaches, does \emph{not} scale the loss or gradients, but instead guarantees unbiasedness by employing a sampling procedure.

\subsection{Position Bias}
An important aspect of unbiased \ac{LTR} is estimating the observation probabilities (often called \emph{propensities}), which are necessary to apply \ac{IPS}-weighting.
For example, \citet{wang2018position} propose result randomization strategies for obtaining observation probabilities under a position bias user model.
Recent work has focused on \emph{intervention harvesting}, a less invasive method for propensity estimation~\cite{agarwal2019estimating,fang2019intervention}.

In our work, we do \emph{not} focus on the propensity estimation aspect of unbiased \ac{LTR}, instead assuming the propensity scores are known a priori, because our goal is to study the \emph{convergence rate} of \ac{IPS}-weighted \ac{SGD} algorithms.

\subsection{Convergence Rates for \ac{SGD}}
There is a significant amount of work studying upper bounds for the convergence rate of \ac{SGD}~\cite{robbins1951stochastic} under varying assumptions of convexity and smoothness of the optimization objective~\cite{shamir2013stochastic,hazan2014beyond,rakhlin2011making,hazan2007logarithmic}.
The convergence rate proofs presented in this paper build on proofs provided by~\citet{shalev2014understanding}.
However, unlike previous work, our work investigates the role of \ac{IPS} weights in optimization problems.
We note that there is work investigating the use of importance sampling for \ac{SGD} algorithms~\cite{zhao2015stochastic, alain2015variance, needell2014stochastic, katharopoulos2018not}.
These approaches all start from an unbiased dataset and then improve the convergence rate of \ac{SGD} by manually perturbing the sampling distribution during learning, which introduces bias, and then applying \ac{IPS}-weighting to remove the introduced bias.

Our work is different from these approaches because we learn from an \emph{already biased} click log dataset, where the source of bias is outside of our control (e.g., position bias coming from users).
This means we do not assume control over how the dataset is generated nor do we assume control over the propensity scores.

%!TEX root = ../sigir2020.tex

\section{Background}
\label{sec:background}

We consider the problem of Counterfactual \acf{LTR} as described in~\cite{joachims2017unbiased}.
First, we introduce our notation for \ac{LTR} with additive metrics.
After that, we describe Counterfactual \ac{LTR} from biased click feedback and present the \ac{IPS}-weighted \ac{SGD} algorithm that is commonly used for Counterfactual \ac{LTR}.

\subsection{Learning to Rank with Additive Metrics}
\label{sec:background:additive}

Let $S_{\bm{w}}(q, d)$ be a scoring function, parameterized by $\bm{w}$, that, for a given query $q$ and item $d \in D_q$ produces a real-valued ranking score, where $D_q$ is a set of candidate items for query $q$.
Let $\mathit{rel}(q, d) \in \{0, 1\}$ be the relevance of item $d$ to query $q$, where we assume binary relevance for simplicity.
We write $\mathit{rank}(d \mid q, D_q, S_{\bm{w}})$ for the rank of item $d \in D_q$ after sorting all items $D_q$ by their respective scores using the scoring function $S_{\bm{w}}$.
We consider the class of additive ranking metrics, as described in~\cite{agarwal2019general}:
\begin{equation}
\Delta ( S_{\bm{w}} \mid q ) = \sum_{d \in D_q} \lambda \rbrackets*{\mathit{rank}(d \mid q, D_q, S_{\bm{w}})} \cdot \mathit{rel}(q, d),
\label{eq:additiveranking}
\end{equation}
where $\lambda$ is a weighting function of the rank of an item that can capture different ranking metrics such as Average Relevant Rank, DCG, Precision@k and more~\cite{agarwal2019general}.
For simplicity we will assume $\lambda(x) = x$, but note that our findings hold for any convex, (sub)differentiable and monotonically increasing weighting function $\lambda$.
It is common practice to make Equation~\ref{eq:additiveranking} differentiable by upper bounding the $\mathit{rank}(d \mid q, D_q, S_{\bm{w}})$ term with a pairwise hinge loss~\cite{joachims2002optimizing,joachims2017unbiased,agarwal2019general}:
\begin{equation}
\begin{split}
\mathit{rank}&(d \mid q, D_q, S_{\bm{w}}) \leq {} \\
 &1 + \sum_{d' \in D_q} \max(0, 1 - (S_{\bm{w}}(q, d) - S_{\bm{w}}(q, d'))).
\end{split}
\label{eq:hingeloss}
\end{equation}
Now, suppose we are given a sample $Q$ of i.i.d. queries $q \sim P(q)$. The goal in \acf{LTR} is to learn a scoring function $S_{\bm{w}}$ that minimizes risk:
\begin{align}
&\mbox{}\hspace*{-2mm}
\arg\min_{\bm{w}} R(\bm{w}) = \arg\min_{\bm{w}} \int_{q} \Delta(S_{\bm{w}} \mid q) d P(q) \hspace*{-4mm}\mbox{} \nonumber \\
&\mbox{}\hspace*{-2mm}= \arg\min_{\bm{w}} \mathbb{E}_{Q}\sbrackets*{ \frac{1}{|Q|} \sum_{q \in Q} \sum_{d \in D_q} \lambda \rbrackets*{\mathit{rank}(d \mid q, D_q, S_{\bm{w}})} \cdot \mathit{rel}(q, d)}.
\hspace*{-2mm}\mbox{}
\label{eq:risk}
\end{align}
In most practical settings we cannot directly observe the relevance $\mathit{rel}(q, d)$, but only partial feedback in the form of clicks collected on rankings produced by a deployed production ranker.
As a result, we cannot directly minimize the empirical risk associated with Equation~\ref{eq:risk}.
Furthermore, unlike online \ac{LTR} approaches and bandit algorithms, we assume that we do not have any control over the deployed production ranker (i.e., we cannot perform \emph{interventions}).
In Counterfactual \ac{LTR} we instead focus on minimizing an unbiased estimate of the empirical risk using a historical click log.

\subsection{Counterfactual~Learning~to~Rank~with Biased Feedback}
\label{sec:background:cltr}
Suppose a deployed production ranker is collecting user interactions, i.e., clicks, as follows:
\begin{itemize}[leftmargin=*,nosep]
\item a user issues a query $q \sim P(q)$;
\item the user is presented with a ranking of the candidate items $D_{q}$ for the issued query; and
\item the user observes and clicks on an item $d \in D_{q}$ with probability $P(c(d) = 1)$, where $c(d) \in \{0, 1\}$ indicates a click on item $d$.
\end{itemize}
In line with existing work~\cite{craswell2008experimental,joachims2017unbiased} we assume that the examination hypothesis holds, i.e., that clicks can only occur on observed items.
In other words, a click on an item depends on the probability that the user observes the item \emph{and} decides to click on it:
\begin{equation}
P\rbrackets*{c(d) = 1} \stackrel{\mathrm{def}}{=} P\rbrackets*{o(d) = 1} \cdot P\rbrackets*{c(d) = 1 \mid o(d) = 1} ,
\end{equation}
where $o(d) \in \{0, 1\}$ indicates that item $d$ was observed by the user.
Finally, for any two observed items $d$ and $d'$ it is more likely that a user clicks on a relevant item than a non-relevant item. More formally:
\begin{equation}
\begin{split}
&\mathit{rel}(q, d) > \mathit{rel}(q, d')  \\
&\implies P\rbrackets*{c(d) = 1 \mid o(d) = 1} > P\rbrackets*{c(d') = 1 \mid o(d') = 1}.
\end{split}
\end{equation}
Under these assumptions a click does not necessarily indicate relevance, nor does a non-click necessarily indicate non-relevance.
In general $c(d) \neq  \mathit{rel}(q, d)$, and naively optimizing the empirical risk using clicks would be a suboptimal strategy~\cite{joachims2017unbiased}.
Instead, it is necessary to correct for the observation probabilities $P(o(d) = 1)$, often called the \emph{propensity}.
This motivates the use of \acf{IPS}, where inversely weighing the propensities of clicked items can debias the click data.

For our work we assume that the propensities $P(o(d) = 1)$ are known a priori.
In practice, the propensities are commonly estimated via A/B testing~\cite{wang2016learning} or through intervention harvesting~\cite{agarwal2019estimating,fang2019intervention}, usually under some assumption of a user behavior model such as the position-bias model~\cite{craswell2008experimental}.
Modeling and estimating these propensity scores falls outside the scope of this paper as our aim is to understand and improve the \emph{convergence rate} of \ac{IPS}-weighted optimization for \ac{LTR}.
We refer to existing work~\cite{ai2018unbiased,agarwal2019estimating,fang2019intervention,chandar2018estimating,carterette2018offline} for more detailed information about how the propensity scores can be modelled and/or estimated.

We now reach the main approach for Counterfactual \ac{LTR}, which is to apply \acf{IPS} to debias our optimization objective.
Suppose we are given a \emph{click log dataset} containing $n$ clicks:
\begin{equation}
\mathcal{D} = \cbrackets*{\rbrackets*{q_i, D_{q_i}, d_i, p_i}}_{i=1}^{n},
\end{equation}
where:
\begin{itemize}[leftmargin=*,nosep]
\item $q_i \sim P(q_i)$ is the issued query;
\item $D_{q_i}$ is the set of candidate items;
\item $d_i \in D_{q_i}$ is the clicked item (i.e., $c(d_i) = 1$); and
\item $p_i = P(o(d_i) = 1)$ is the propensity of item $d_i$.
\end{itemize}
Our goal now is to solve the following optimization problem:
\begin{flalign}
\arg\min_{\bm{w}} R_{\textit{IPS}}(\bm{w}) &= \arg\min_{\bm{w}} \frac{1}{n} \sum_{i=1}^{n} \frac{1}{p_i} \lambda(\textit{rank}(d_i \mid q_i, D_{q_i}, S_{\bm{w}})) \nonumber \\
&= \arg\min_{\bm{w}} \frac{1}{n} \sum_{i=1}^{n} \frac{1}{p_i} f_i(\bm{w}).
\label{eq:cfrisk}
\end{flalign}
It is easy to show that $R_{\textit{IPS}}(\bm{w})$ is an unbiased estimate of $R(\bm{w})$, i.e., that $\mathbb{E}\sbrackets*{R_{\textit{IPS}}(\bm{w})} = \mathbb{E}\sbrackets*{R(\bm{w})}$, as long as $P(o(d) = 1) > 0$ for all $d$, using the proof of Section 4 in~\cite{joachims2017unbiased}.
We note that the minimization problem in Equation~\ref{eq:cfrisk} permits stochastic optimization and can be efficiently solved with a variety of \ac{SGD} approaches.
A common approach for counterfactual learning is to treat the inverse propensity scores as weights and multiply the gradient $\nabla f_{i}(\bm{w})$ with $\frac{1}{p_i}$ to obtain an unbiased gradient estimate~\cite{jagerman-2019-model,agarwal2019general,ai2018unbiased,joachims2018deep}.
We display this \ac{IPS}-weighted \ac{SGD} approach in Algorithm~\ref{alg:sgd:ipsweighted}.

\begin{algorithm}
\caption{\acs{IPS}-weighted \acf{SGD}}
\label{alg:sgd:ipsweighted}
\begin{algorithmic}
\State $\bm{w}_1 = \bm{0}$
\For{$t \gets 1, \ldots, T$}
\State $i_t \sim \mathit{Uniform}(0, n)$ \Comment{sample $i_t$ from uniform distribution}
\State $\bm{g}_t = \frac{1}{p_{i_t}} \nabla f_{i_t}(\bm{w}_{t})$ \Comment{compute IPS-weighted gradient}
\State $\bm{w}_{t+1} = \bm{w}_{t} - \eta_t \bm{g}_t$ \Comment{SGD update step}
\EndFor
\State $\bm{\bar{w}} = \frac{1}{T} \sum_{t=1}^{T} \bm{w}_{t}$
\State \Return $\bm{\bar{w}}$
\end{algorithmic}
\end{algorithm}

\noindent%
Scaling the gradients with \ac{IPS} weights, as is done in Algorithm~\ref{alg:sgd:ipsweighted}, is a common technique for unbiased \ac{LTR}~\cite{bendersky2017learning,jagerman-2019-model,agarwal2019general,ai2018unbiased,wang2018position}.
It is known that high-variance gradients can cause poor convergence for general \ac{SGD}-style algorithms~\cite{shalev2014understanding}.
However, to the best of our knowledge, the exact impact of \ac{IPS} weights on the convergence rate of \ac{SGD} algorithms for \ac{LTR} has not been studied.
In other words, the nature of the relationship between the \ac{IPS} weights and the convergence rate of \ac{SGD} algorithms is an open problem.
To address this problem, we will analyze Algorithm~\ref{alg:sgd:ipsweighted} and provide results describing the relationship between \ac{IPS} weights and the convergence rate in the next section.

%!TEX root = ../sigir2020.tex

\section{Convergence of \ac{IPS}-weighted \ac{SGD}}
\label{sec:analysis}

In this section our aim is to better understand the convergence rate of Algorithm~\ref{alg:sgd:ipsweighted} and analyze the impact of \ac{IPS} weights on the convergence rate.

\subsection{Convergence Rate Analysis}
\label{sec:background:convergence}

Let $R_{\textit{IPS}}(\bm{w}) = \frac{1}{n} \sum_{i=1}^{n} \frac{1}{p_i} f_{i}(\bm{w})$ be the function that we wish to minimize using Algorithm~\ref{alg:sgd:ipsweighted}.
We assume that each $f_{i}(\bm{w})$ is convex and, consequently, that $R_{\textit{IPS}}$ is convex since each $\frac{1}{p_{i}} > 0$.
In \ac{LTR} this is a reasonable assumption because the loss can often be convex, for example the pairwise hinge loss formulation presented in Equation~\ref{eq:hingeloss} is convex.
Furthermore, the rank weighting functions $\lambda(\cdot)$ presented in Section~\ref{sec:background:additive} are convex for, e.g., the average relevant rank or DCG weighting schemes.
Additionally, we assume $R_{\textit{IPS}}$ is minimized at some point $\bm{w}^* \in \arg\min_{\bm{w} : \norm{\bm{w}} \leq B} R_{\textit{IPS}}(\bm{w})$.
We denote with $M$ the maximum \ac{IPS} weight in the dataset: $M = \max_i \frac{1}{p_i}$.
As in previous analyses of regular SGD~\cite{shalev2014understanding}, the goal of our analysis is to bound the suboptimality of the solution produced by Algorithm~\ref{alg:sgd:ipsweighted}:
\begin{equation}
\expectation{R_{\textit{IPS}}(\bm{\bar{w}}) - R_{\textit{IPS}}(\bm{w}^{*})
}.
\end{equation}

\begin{theorem}
\label{theorem:ipsweights}
Let $f_{i}(\bm{w})$ be a convex function for each $i$ and let \ $\bm{w}^*$ be a minimizer of $R_{\textit{IPS}}(\bm{w}) = \frac{1}{n}\sum_{i=1}^{n} \frac{1}{p_i} f_{i}(\bm{w})$ such that $\norm{\bm{w}^*} \leq B$.
Assume that  $\norm{\nabla f_{i_t}(\bm{w}_{t})} \leq G$ for all $t$ and let $\bar{\bm{w}}$ be the solution produced by running Algorithm~\ref{alg:sgd:ipsweighted} for $T$ iterations with $\eta = \sqrt{\frac{B^2}{(MG)^2 T}}$. Then:
\begin{equation}
\expectation{R_{\textit{IPS}}(\bm{\bar{w}}) - R_{\textit{IPS}}(\bm{w}^{*})} \leq \frac{B(MG)}{\sqrt{T}}.
\end{equation}
\end{theorem}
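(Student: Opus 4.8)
The plan is to recognize that Algorithm~\ref{alg:sgd:ipsweighted} is nothing but generic stochastic (sub)gradient descent applied \emph{directly} to the function $R_{\textit{IPS}}$, and then to invoke the standard convergence bound for SGD on convex objectives (in the style of~\citet{shalev2014understanding}), with the gradient-norm parameter $G$ replaced by the ``effective'' bound $MG$.

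First I would check that the IPS-weighted gradient is an unbiased estimate of a (sub)gradient of $R_{\textit{IPS}}$ at the current iterate. Since $i_t$ is drawn uniformly from $\{1,\dots,n\}$ and $R_{\textit{IPS}}(\bm{w}) = \frac{1}{n}\sum_{i=1}^{n}\frac{1}{p_i}f_i(\bm{w})$, conditioning on $\bm{w}_t$ gives $\expectation{\bm{g}_t \mid \bm{w}_t} = \frac{1}{n}\sum_{i=1}^{n}\frac{1}{p_i}\nabla f_i(\bm{w}_t) = \nabla R_{\textit{IPS}}(\bm{w}_t)$, and the same identity holds with subgradients when $f_i$ is only subdifferentiable (as for the pairwise hinge loss). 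This is the step that ties the dynamics of the algorithm to the objective it is claimed to minimize, and it is exactly where uniform sampling of $i_t$ matters. Next I would bound the second moment of the stochastic gradient: because $\tfrac{1}{p_{i_t}} \le M$ by definition of $M$ and $\norm{\nabla f_{i_t}(\bm{w}_t)} \le G$ by assumption, we have $\norm{\bm{g}_t} \le MG$ deterministically, hence $\expectation{\norm{\bm{g}_t}^2} \le (MG)^2$.

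Then I would apply the standard projection-free SGD descent lemma: for the update $\bm{w}_{t+1} = \bm{w}_t - \eta\bm{g}_t$ started at $\bm{w}_1 = \bm{0}$,
\[
\sum_{t=1}^{T}\iprod*{\bm{w}_t - \bm{w}^*, \bm{g}_t} \le \frac{\norm{\bm{w}^*}^2}{2\eta} + \frac{\eta}{2}\sum_{t=1}^{T}\norm{\bm{g}_t}^2 .
\]
Taking expectations, using the unbiasedness from the first step together with convexity of $R_{\textit{IPS}}$ to get $\expectation{R_{\textit{IPS}}(\bm{w}_t) - R_{\textit{IPS}}(\bm{w}^*)} \le \expectation{\iprod*{\bm{w}_t - \bm{w}^*, \bm{g}_t}}$, and Jensen's inequality for the averaged iterate $\bar{\bm{w}}$, yields
\[
\expectation{R_{\textit{IPS}}(\bar{\bm{w}}) - R_{\textit{IPS}}(\bm{w}^*)} \le \frac{B^2}{2\eta T} + \frac{\eta (MG)^2}{2}.
\]
Finally I would substitute $\eta = \sqrt{B^2/((MG)^2 T)}$, which balances the two terms, and simplify to obtain the claimed bound $B(MG)/\sqrt{T}$.

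I do not expect a serious obstacle here, since the argument is a reduction to a known result; the only points that require a little care are (i) justifying the unbiasedness identity with \emph{subgradients} rather than gradients, since the pairwise hinge loss of Equation~\ref{eq:hingeloss} is not everywhere differentiable, and (ii) ensuring the generic descent lemma is used in its projection-free form, so that we only ever need the bound $\norm{\bm{w}^*} \le B$ on the optimum and never need to control the norms of the iterates $\bm{w}_t$ themselves.
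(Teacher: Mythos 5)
Your proposal is correct and follows essentially the same route as the paper's own proof: unbiasedness of the IPS-weighted stochastic gradient under uniform sampling, the deterministic bound $\norm{\bm{g}_t}\leq MG$, the standard SGD regret lemma (Lemma 14.1 of~\citet{shalev2014understanding}), Jensen's inequality for the averaged iterate, and the balancing choice of $\eta$. Your remark about handling subgradients of the hinge loss is a minor refinement the paper glosses over, but it does not change the argument.
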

\begin{proof}
\vspace{-0.1cm}
This convergence rate proof is a variant of the proof of Theorem 14.8 from~\cite{shalev2014understanding}, where we use \ac{IPS}-weighted gradients $\bm{g}_t$ and construct a bound on the gradient variance in Equation~\ref{eq:gradientvariance}.
Since our notation deviates slightly from the notation in~\cite{shalev2014understanding}, we include the full proof here for clarity.
Denote with $i_{1:T}$ the sequence of random indices $i_1, \ldots, i_T$, then:
\begin{subequations}
\begin{flalign}
&\mathbb{E}_{i_{1:T}}\sbrackets*{R_{\textit{IPS}}(\bar{\bm{w}}) - R_{\textit{IPS}}(\bm{w^*})} \nonumber  \\
&= \mathbb{E}_{i_{1:T}}\sbrackets*{R_{\textit{IPS}}\rbrackets*{\frac{1}{T}\sum_{t=1}^{T}\bm{w}_{t}} - R_{\textit{IPS}}(\bm{w^*})}  \label{eq:thm1:1:definition} \\
&\leq \mathbb{E}_{i_{1:T}}\sbrackets*{\frac{1}{T} \sum_{t=1}^{T} R_{\textit{IPS}}(\bm{w}_{t}) - R_{\textit{IPS}}(\bm{w^*})} \label{eq:thm1:1:jensens} \\
&= \frac{1}{T} \sum_{t=1}^{T} \mathbb{E}_{i_{1:T}}\sbrackets*{R_{\textit{IPS}}(\bm{w}_{t}) - R_{\textit{IPS}}(\bm{w^*})}. \label{eq:thm1:1:linearity}
\end{flalign}
\end{subequations}
Here, \eqref{eq:thm1:1:definition} follows from the definition of $\bar{\bm{w}}$,  \eqref{eq:thm1:1:jensens} is due to Jensen's inequality~\citep{rudin-1987-real}, and, finally, \eqref{eq:thm1:1:linearity} is obtained by applying linearity of expectation.
Since $\bm{w}_{t}$ depends only on the indices $i_{1:t-1}$ we get:
\begin{flalign}
&\frac{1}{T}\sum_{t=1}^T \mathbb{E}_{i_{1:T}}\sbrackets*{R_{\textit{IPS}}(\bm{w}_{t}) - R_{\textit{IPS}}(\bm{w}^*)} \nonumber \\
&=\frac{1}{T}\sum_{t=1}^T \mathbb{E}_{i_{1:t-1}}\sbrackets*{R_{\textit{IPS}}(\bm{w}_{t}) - R_{\textit{IPS}}(\bm{w}^*)}.
\label{eq:iterates}
\end{flalign}
Once the indices $i_{1:t-1}$ are known, the value of \  $\bm{w}_{t}$ is no longer ran\-dom.
Furthermore, since each $i_t$ is uniformly sampled from the dataset, it follows that $\bm{g}_t$ is an unbiased estimate of $\nabla R_{\textit{IPS}}(\bm{w}_{t})$:
\begin{equation}
\mathbb{E}_{i_t}\sbrackets*{\bm{g}_t \mid i_{1:t-1}}
= \mathbb{E}_{i_t}\sbrackets*{\bm{g}_t \mid \bm{w}_{t}}
= \nabla R_{\textit{IPS}}(\bm{w}_{t}).
\label{eq:unbiasedgrad}
\end{equation}
Continuing from Equation~\ref{eq:iterates}, we have:
\begin{subequations}
\begin{flalign}
&\frac{1}{T}\sum_{t=1}^{T} \mathbb{E}_{i_{1:t-1}}\sbrackets*{R_{\textit{IPS}}(\bm{w}_{t}) - R_{\textit{IPS}}(\bm{w}^*)} \nonumber \\
&\leq \frac{1}{T}\sum_{t=1}^{T} \mathbb{E}_{i_{1:t-1}}\sbrackets*{\iprod*{ \bm{w}_{t} - \bm{w}^*, \nabla R_{\textit{IPS}}(\bm{w}_{t}) }} \label{eq:convexity:definition} \\
&= \frac{1}{T}\sum_{t=1}^{T} \mathbb{E}_{i_{1:t-1}}\sbrackets*{\iprod*{ \bm{w}_{t} - \bm{w}^*, \mathbb{E}_{i_t} \sbrackets*{\bm{g}_t \mid i_{1:t-1}} }} \label{eq:convexity:unbiased} \\
&= \frac{1}{T}\sum_{t=1}^{T} \mathbb{E}_{i_{1:t-1}}  \mathbb{E}_{i_{1:t}}\sbrackets*{\iprod*{\bm{w}_{t} - \bm{w}^*, \bm{g}_t} \mid i_{1:t-1} } \label{eq:convexity:linearity} \\
&= \frac{1}{T}\sum_{t=1}^{T} \mathbb{E}_{i_{1:t}} \sbrackets*{\iprod*{ \bm{w}_{t} - \bm{w}^*, \bm{g}_t }}, \label{eq:convexity:lawoftotal}
\end{flalign}
\end{subequations}
where \eqref{eq:convexity:definition} follows from the convexity of $R_{\textit{IPS}}$, \eqref{eq:convexity:unbiased} can be obtained by using Equation~\ref{eq:unbiasedgrad} ($\nabla R_{\textit{IPS}}(\bm{w}_t) = \mathbb{E}\sbrackets*{\bm{g}_t \mid i_{1:t-1}}$), \eqref{eq:convexity:linearity} is due to the linearity of expectation, and, finally, \eqref{eq:convexity:lawoftotal} is obtained by applying the law of total expectation.
We can now use Lemma 14.1 from~\cite{shalev2014understanding} since Equation~\ref{eq:convexity:lawoftotal} is of the required form and obtain:
\begin{flalign}
&\frac{1}{T}\sum_{t=1}^{T} \mathbb{E}_{i_{1:t}} \sbrackets*{\iprod*{ \bm{w}_{t} - \bm{w}^*, \bm{g}_t }} \leq \frac{1}{T} \rbrackets*{ \frac{\norm*{\bm{w}^*}^2}{2\eta} + \frac{\eta}{2} \sum_{t=1}^T \norm{\bm{g}_t}^2 }. \label{eq:lemma14:1}
\end{flalign}
Here is where we deviate from the standard proof of convergence for \ac{SGD} and upper bound the \ac{IPS}-weighted gradients $\bm{g}_t$ as follows:
\begin{equation}
\norm*{\bm{g}_t} = \norm*{\frac{1}{p_{i_t}} \nabla f_{i_t}(\bm{w}_{t})} = \frac{1}{p_{i_t}}\norm*{\nabla f_{i_t}(\bm{w}_{t})} \leq M G.
\label{eq:gradientvariance}
\end{equation}
Next, we can plug this upper bound on $\norm*{\bm{g}_t}$ into Equation~\ref{eq:lemma14:1}, and use the assumption that $\norm*{\bm{w}^*} \leq B$ to obtain:
\begin{flalign}
\frac{1}{T} \rbrackets*{\frac{\norm*{\bm{w}^*}^2}{2\eta} + \frac{\eta}{2} \sum_{t=1}^T \norm{\bm{g}_t}^2} \leq \frac{1}{T} \rbrackets*{\frac{B^2}{2\eta} + \frac{\eta}{2} T (MG)^2}.
\end{flalign}
Finally, by plugging in $\eta = \sqrt{\frac{B^2}{(MG)^2 T}}$ and applying algebraic manipulations we obtain:
\begin{subequations}
\begin{flalign}
&\frac{1}{T} \rbrackets*{\frac{B^2}{2\eta} + \frac{\eta}{2} T (MG)^2} \nonumber \\
&= \frac{1}{T} \rbrackets*{\frac{B^2}{2\sqrt{\frac{B^2}{(MG)^2T}}} + \frac{\sqrt{\frac{B^2}{(MG)^2T}}}{2} T (MG)^2} \\
&= \frac{1}{T} \rbrackets*{\frac{B (MG) \sqrt{T}}{2} + \frac{B(MG)\sqrt{T}}{2}} \\
& = \frac{B(MG) \sqrt{T}}{T} = \frac{B(MG)}{\sqrt{T}}.\qedhere
\end{flalign}
\end{subequations}
\end{proof}

\noindent%
Theorem~\ref{theorem:ipsweights} combines several important quantities that determine how fast Algorithm~\ref{alg:sgd:ipsweighted} will converge.
First, we have $1/\sqrt{T}$, which indicates that, as the number of iterations $T$ grows, the solution $\bm{\bar{w}}$ gets closer to the optimal solution $\bm{w}^*$.
Second, we have $B$, which tells us how far away $\bm{w}^*$ is from the starting point $\bm{0}$.
Clearly, for large $B$, the optimum $\bm{w}^*$ is far away and we require more iterations to converge.
Finally, we have the gradient variance term $(MG)$.
When the gradients have potentially large variance, we need to correspondingly set a small learning rate to prevent divergent behavior and, as a result, it takes longer to converge.
As a consequence of Theorem~\ref{theorem:ipsweights}, it is clear that in order to achieve an error of at most $\epsilon$, it suffices to run Algorithm~\ref{alg:sgd:ipsweighted} for $T$ iterations where:
\begin{equation}
T \geq \frac{B^2 (MG)^2}{\epsilon^2}.
\label{eq:nriterations}
\end{equation}

\subsection{Discussion}
\label{sec:background:improvements}
The key insight that our analysis provides is that, for \ac{IPS}-weighted \ac{SGD} algorithms, the number of iterations required to achieve an $\epsilon$-optimal solution grows with a factor $(MG)^2$.
We note that there are known variations of \ac{SGD} that can improve the convergence rate presented in Theorem~\ref{theorem:ipsweights} from $\BigO{1/\sqrt{T}}$ to $\BigO{1/T}$, for example see \cite{shamir2013stochastic,hazan2007logarithmic,rakhlin2011making,hazan2014beyond}.
However, their analyses are considerably more complex and do not remove the dependency on $\norm*{\bm{g}_t}^2$, which for the \ac{IPS}-weighted \ac{SGD} case remains upper bounded by $(MG)^2$.
This means that despite having faster convergence in terms of $T$, these methods do not improve the slowdown introduced by the \ac{IPS}-weights.

Moreover, \citet{agarwal2009information} show that, for strongly convex and Lipschitz smooth functions, the term $\norm*{\bm{g}_t}^2$, and consequently the term $(MG)^2$ in Equation~\ref{eq:nriterations}, cannot be improved for \emph{any SGD algorithm} (for sufficiently large $T$).
In practice, the value of $M$ can be very large, for example in cases with small propensities $p_i$ (e.g., in situations with a significant amount of position bias such as when the candidate set $D_q$ is very large).
The above facts lead us to hypothesize that, as long as the gradients are scaled with \ac{IPS} weights, the convergence rate is severely slowed by the magnitude of the \ac{IPS} weights.
Our experiments in Section~\ref{sec:results} support this hypothesis.

%!TEX root = ../sigir2020.tex

\section{Improved Convergence with Weighted Sampling}
\label{sec:method}

\subsection{\OurMethod{}:~\acs{SGD} with \acs{IPS}-proportional Sampling}
\label{sec:method:countersample}

Theorem~\ref{theorem:ipsweights} shows that the convergence of \ac{IPS}-weighted \ac{SGD} is slowed by a factor $M^2$.
Moreover, as described in Section~\ref{sec:background:improvements}, as long as gradients $\bm{g}_t$ are scaled by IPS weights, this dependency on $M$ cannot be improved~\cite{agarwal2009information}.
Clearly, for situations where $M$ is large, this can lead to slow convergence and make learning inefficient.

To overcome this problem we propose a sampling-based \ac{SGD} strategy.
The key idea is to debias our optimization objective via \emph{sampling} instead of \emph{weighting}.
As we will prove below, this sampling-based approach similarly guarantees unbiasedness of the optimization objective but has a better convergence rate.
We call our approach \OurMethod{} and it is displayed in Algorithm~\ref{alg:sgd:ipssampling}.

\begin{algorithm}
\caption{\OurMethod: \ac{SGD} with \ac{IPS}-proportional sampling}
\label{alg:sgd:ipssampling}
\begin{algorithmic}
\State $\bm{w}_{1}\gets \bm{0}$
\State $\bar{M} \gets \frac{1}{n} \sum_{i=1}^{n} \frac{1}{p_i}$
\For{$t \gets 1, \ldots, T$}
\State $i_t \sim P(i_t \mid \mathcal{D})$ \Comment{sample $i_t$ according to Equation~\ref{eq:sampling-probability}}
\State $\bm{g}_t = \bar{M}~\nabla f_{\lambda, i_t}(\bm{w}_{t})$ \Comment{compute gradient}
\State $\bm{w}_{t+1} = \bm{w}_{t} - \eta \bm{g}_t$ \Comment{\acs{SGD} update step}
\EndFor
\State $\bm{\bar{w}} \leftarrow \frac{1}{T} \sum_{t=1}^{T} \bm{w}_{t}$
\State \Return $\bm{\bar{w}}$
\end{algorithmic}
\end{algorithm}

\noindent%
\OurMethod{} functions as follows.
First, we assign each data point $i$ in our dataset $\mathcal{D}$ the following probability of being sampled:
\begin{equation}
P(i \mid \mathcal{D}) = \frac{\frac{1}{p_i}}{\sum_{j=1}^{n}\frac{1}{p_j}}.
\label{eq:sampling-probability}
\end{equation}
We then proceed exactly like regular \ac{SGD}, where instead of sampling datapoints uniformly from the dataset, they are sampled using Equation~\ref{eq:sampling-probability}.
Furthermore, the algorithm does not scale the gradients by the \ac{IPS}-weights, but by a constant factor:
\begin{equation}
\bar{M} = \frac{1}{n} \sum_{i=1}^{n} \frac{1}{p_i},
\end{equation}
which is necessary to guarantee unbiasedness (see Section~\ref{sec:method:unbiased}).

In practice, one would tune the learning rate $\eta$ in Algorithm~\ref{alg:sgd:ipssampling}, making the inclusion of $\bar{M}$ unnecessary as it merely scales the gradients by a constant, which can be offset by the particular $\eta$ chosen.
Therefore, for practical implementations, it is not necessary to include this constant.
We include $\bar{M}$ here for the purposes of guaranteeing unbiasedness and analyzing the convergence rate.

\subsection{Unbiasedness}
\label{sec:method:unbiased}
To show that Algorithm~\ref{alg:sgd:ipssampling} minimizes the unbiased objective $R_{\textit{IPS}}(\bm{w})$, it is sufficient to show that, in expectation, the gradient $\bm{g}_t$ is an unbiased estimate of $\nabla R_{\textit{IPS}}(\bm{w_{t}})$ for any $t$.
\begin{theorem}
\label{thm:unbiasedness}
Let $R_{\textit{IPS}}(\bm{w})$ be the function to be optimized and let $\bm{g}_t$ be the gradient at time $t$ as computed by Algorithm~\ref{alg:sgd:ipssampling}, then:
\begin{equation}
\mathbb{E}_{i_t}\sbrackets*{\bm{g}_t \mid \bm{w}_{t}} = \nabla R_{\textit{IPS}}(\bm{w_{t}}).
\end{equation}
\end{theorem}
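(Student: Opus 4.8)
The plan is to unfold the definition of $\bm{g}_t$ from Algorithm~\ref{alg:sgd:ipssampling} and take the expectation over the single random draw $i_t \sim P(i_t \mid \mathcal{D})$, conditioned on $\bm{w}_t$. The first observation is that, once we condition on $\bm{w}_t$, the only randomness in $\bm{g}_t = \bar{M}\,\nabla f_{i_t}(\bm{w}_t)$ is the index $i_t$, which is drawn from the discrete distribution in Equation~\ref{eq:sampling-probability}. So the conditional expectation is a finite sum $\sum_{i=1}^n P(i \mid \mathcal{D})\, \bar{M}\, \nabla f_i(\bm{w}_t)$.

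The second step is to substitute $P(i \mid \mathcal{D}) = \frac{1/p_i}{\sum_{j=1}^n 1/p_j}$ and to recognize that the normalizing constant in the denominator is exactly $n\bar{M}$, since $\bar{M} = \frac{1}{n}\sum_{i=1}^n \frac{1}{p_i}$. The factor $\bar{M}$ that multiplies the gradient then cancels against the $\bar{M}$ hidden in the normalizer, leaving $\frac{1}{n}\sum_{i=1}^n \frac{1}{p_i}\,\nabla f_i(\bm{w}_t)$. The third and final step is to identify this sum with $\nabla R_{\textit{IPS}}(\bm{w}_t)$, using linearity of the gradient applied to $R_{\textit{IPS}}(\bm{w}) = \frac{1}{n}\sum_{i=1}^n \frac{1}{p_i} f_i(\bm{w})$.

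There is no real obstacle here: the result is a one-line computation whose only delicate point is bookkeeping the normalization constant $\sum_j 1/p_j = n\bar{M}$ correctly, which is precisely why $\bar{M}$ (rather than, say, $1$ or $M$) is the right constant to scale the gradient by. I would also briefly note the well-definedness condition $p_i > 0$ for all $i$ (already assumed in Section~\ref{sec:background:cltr}), which guarantees that both the sampling distribution and $\bar{M}$ are finite and positive. The conclusion that Algorithm~\ref{alg:sgd:ipssampling} optimizes the unbiased objective then follows from the standard stochastic-approximation argument, exactly as used for Equation~\ref{eq:unbiasedgrad} in the proof of Theorem~\ref{theorem:ipsweights}.
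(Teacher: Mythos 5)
Your proposal is correct and follows essentially the same route as the paper's proof: condition on $\bm{w}_t$, expand the expectation over $i_t$ as a finite sum weighted by $P(i \mid \mathcal{D})$ from Equation~\ref{eq:sampling-probability}, and cancel the scaling constant $\bar{M}$ against the normalizer $\sum_j 1/p_j = n\bar{M}$ to recover $\nabla R_{\textit{IPS}}(\bm{w}_t)$. Your added remark about $p_i > 0$ ensuring well-definedness is a harmless (and reasonable) extra observation, but no substantive difference from the paper's argument.
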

\begin{proof}
The proof uses the definition of $\bm{g}_t$ and linearity of expectation (\ref{eq:unbiased:linearity}) and the definition of expectation (\ref{eq:unbiased:definition}) where we use Equation~\ref{eq:sampling-probability} as the sampling probability. Using the definition of $\bar{M}$ and algebraic manipulations completes the proof (\eqref{eq:unbiased:algebraic1} and \eqref{eq:unbiased:algebraic2}):
\begin{subequations}
\begin{flalign}
\mathbb{E}_{i_t}\sbrackets*{\bm{g}_t \mid \bm{w}_{t}} =&~\mathbb{E}\sbrackets*{\bar{M}~\nabla f_{i_t}(\bm{w}_{t})} =~\bar{M}~\mathbb{E}\sbrackets*{\nabla f_{i_t}(\bm{w}_{t})} \label{eq:unbiased:linearity} \\
=&~\bar{M}\sum_{i=1}^{n} P(i \mid \mathcal{D}) \nabla f_{i}(\bm{w}_{t}) \label{eq:unbiased:definition} \\
=&~\frac{1}{n} \rbrackets*{\sum_{i=1}^{n} \frac{1}{p_i}} \sum_{i=1}^{n} \frac{\frac{1}{p_i}}{\sum_{j=1}^{n} \frac{1}{p_j}}   \nabla f_{i}(\bm{w}_{t}) \label{eq:unbiased:algebraic1} \\
=&~\frac{1}{n} \sum_{i=1}^{n} \frac{1}{p_i} \nabla f_{i}(\bm{w}_{t}) = \nabla R_{\textit{IPS}}(\bm{w}_{t}).
\qedhere
\label{eq:unbiased:algebraic2}
\end{flalign}
\end{subequations}
\end{proof}

\subsection{Convergence Rate}
\label{sec:method:convergence}
We wish to understand the convergence rate of the proposed method \OurMethod{} (Algorithm~\ref{alg:sgd:ipssampling}).
Similar to Section~\ref{sec:background:convergence}, the goal of our analysis is to bound the suboptimality of the solution produced by Algorithm~\ref{alg:sgd:ipssampling}:
\begin{equation}
\expectation{R_{\textit{IPS}}(\bm{\bar{w}}) - R_{\textit{IPS}}(\bm{w}^{*})}.
\end{equation}

\begin{theorem}
\label{theorem:ipssampling}
Let $f_{i}(\bm{w})$ be a convex function for each $i$ and let \ $\bm{w}^*$ be a minimizer of $R_{\textit{IPS}}(\bm{w}) = \frac{1}{n}\sum_{i=1}^{n} \frac{1}{p_i} f_{i}(\bm{w})$ such that $\norm{\bm{w}^*} \leq B$.
Assume that  $\norm{\nabla f_{i_t}(\bm{w}_{t})} \leq G$ for all $t$ and let \ $\bar{\bm{w}}$ be the solution produced by running Algorithm~\ref{alg:sgd:ipssampling} for $T$ iterations with $\eta = \sqrt{\frac{B^2}{(\bar{M}G)^2 T}}$. Then:
\begin{equation}
\expectation{R_{\textit{IPS}}(\bm{\bar{w}}) - R_{\textit{IPS}}(\bm{w}^{*})} \leq \frac{B(\bar{M}G)}{\sqrt{T}}.
\end{equation}
\end{theorem}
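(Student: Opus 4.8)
The plan is to mirror the proof of Theorem~\ref{theorem:ipsweights} essentially line for line, with the single genuine change occurring at the gradient-norm bound. First I would use the definition $\bar{\bm{w}} = \frac{1}{T}\sum_{t=1}^{T}\bm{w}_t$ together with Jensen's inequality and linearity of expectation to reduce $\expectation{R_{\textit{IPS}}(\bar{\bm{w}}) - R_{\textit{IPS}}(\bm{w}^*)}$ to $\frac{1}{T}\sum_{t=1}^{T}\mathbb{E}_{i_{1:t-1}}\sbrackets*{R_{\textit{IPS}}(\bm{w}_t) - R_{\textit{IPS}}(\bm{w}^*)}$, using that $\bm{w}_t$ is a deterministic function of the indices $i_{1:t-1}$.

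Next I would invoke the unbiasedness result already established in Theorem~\ref{thm:unbiasedness}: $\mathbb{E}_{i_t}\sbrackets*{\bm{g}_t \mid \bm{w}_t} = \nabla R_{\textit{IPS}}(\bm{w}_t)$. This is exactly the place where the non-uniform sampling distribution of Equation~\ref{eq:sampling-probability} and the constant rescaling by $\bar{M}$ together keep the stochastic gradient unbiased. Combining this identity with the convexity of $R_{\textit{IPS}}$ (valid since each $f_i$ is convex and $\tfrac{1}{p_i} > 0$) and the law of total expectation gives the inner-product bound $\frac{1}{T}\sum_{t=1}^{T}\mathbb{E}_{i_{1:t}}\sbrackets*{\iprod*{\bm{w}_t - \bm{w}^*, \bm{g}_t}}$, which is precisely the form required by Lemma 14.1 of~\cite{shalev2014understanding}. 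Applying that lemma yields $\frac{1}{T}\rbrackets*{\frac{\norm*{\bm{w}^*}^2}{2\eta} + \frac{\eta}{2}\sum_{t=1}^{T}\norm*{\bm{g}_t}^2}$.

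The only real departure from the Theorem~\ref{theorem:ipsweights} argument is the bound on $\norm*{\bm{g}_t}$. Since Algorithm~\ref{alg:sgd:ipssampling} sets $\bm{g}_t = \bar{M}\,\nabla f_{i_t}(\bm{w}_t)$ with $\bar{M}$ a fixed constant not depending on the sampled index, and since $\norm*{\nabla f_{i_t}(\bm{w}_t)} \leq G$ is assumed uniformly over $i_t$, we get $\norm*{\bm{g}_t} = \bar{M}\norm*{\nabla f_{i_t}(\bm{w}_t)} \leq \bar{M}G$ \emph{deterministically} for every $t$. This contrasts with Equation~\ref{eq:gradientvariance}, where the analogous bound was $MG$ with $M = \max_i \tfrac{1}{p_i} \geq \bar{M}$ --- precisely the source of \OurMethod{}'s faster convergence. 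Plugging $\norm*{\bm{g}_t} \leq \bar{M}G$ and $\norm*{\bm{w}^*} \leq B$ into the previous display gives $\frac{1}{T}\rbrackets*{\frac{B^2}{2\eta} + \frac{\eta}{2}T(\bar{M}G)^2}$, and substituting $\eta = \sqrt{\frac{B^2}{(\bar{M}G)^2 T}}$ followed by the same algebraic simplification as in Theorem~\ref{theorem:ipsweights} collapses this to $\frac{B(\bar{M}G)}{\sqrt{T}}$.

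I do not expect a substantive obstacle, since the whole argument is a transcription of the proof of Theorem~\ref{theorem:ipsweights}. The one point that needs care is justifying that the variance control now comes from the \emph{constant} $\bar{M}$ rather than from the per-sample factor $\tfrac{1}{p_i}$, so that the uniform bound $\norm*{\bm{g}_t} \leq \bar{M}G$ holds with no further assumptions; this is immediate because $\bar{M}$ is independent of $i_t$. It is also worth double-checking that $\eta$ is held fixed across iterations, as required for Lemma 14.1 to apply with a constant step size.
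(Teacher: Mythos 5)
Your proposal is correct and follows essentially the same route as the paper: the paper's own proof simply replaces the bound $\norm*{\bm{g}_t} \leq MG$ of Equation~\ref{eq:gradientvariance} with $\norm*{\bm{g}_t} = \norm*{\bar{M}\,\nabla f_{i_t}(\bm{w}_t)} \leq \bar{M}G$, invokes Theorem~\ref{thm:unbiasedness} for the unbiasedness of $\bm{g}_t$ under the sampling distribution of Equation~\ref{eq:sampling-probability}, and otherwise reruns the argument of Theorem~\ref{theorem:ipsweights} with $\eta = \sqrt{\frac{B^2}{(\bar{M}G)^2 T}}$. Your added remarks --- that $\bar{M}$ is a constant independent of the sampled index and that the step size is held fixed so Lemma 14.1 of~\cite{shalev2014understanding} applies --- are exactly the points the paper relies on implicitly.
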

\begin{proof}
First, we note that in Algorithm~\ref{alg:sgd:ipssampling}, the gradients are bounded as follows:
\begin{equation}
\norm*{\bm{g}_t} = \norm*{\bar{M}~\nabla f_{i_t}(\bm{w}_t)} \leq \bar{M} G.
\label{eq:samplegradientvariance}
\end{equation}
Next, we follow the proof of Theorem~\ref{theorem:ipsweights}, replacing Equation~\ref{eq:gradientvariance} with Equation \ref{eq:samplegradientvariance}, using the result of Theorem~\ref{thm:unbiasedness} to ensure $\mathbb{E}_{i_t}\sbrackets*{\bm{g}_t \mid \bm{w}_{t}} = \nabla R_{\textit{IPS}}(\bm{w_{t}})$ and plugging in $\eta = \sqrt{\frac{B^2}{(\bar{M}G)^2 T}}$ to give us the desired result.
\end{proof}

\noindent%
As a result of Theorem~\ref{theorem:ipssampling}, we can conclude that \OurMethod{} provides significant advantages in terms of convergence rate over standard \ac{IPS} weighting.
Specifically, to obtain an error of at most $\epsilon$, it is sufficient to run \OurMethod{} for:
\begin{equation}
T \geq \frac{B^2 (\bar{M}G)^2}{\epsilon^2}
\end{equation}
iterations.
This is strictly better than the bound that was obtained for Algorithm~\ref{alg:sgd:ipsweighted} in nearly all cases.
The only case where the two methods have the same convergence rate is when $\bar{M} = M$, which can only happen when all the propensity scores are the same, i.e., when $p_i = p_j$ for all $i, j$.
However, this can only be the case when the click log itself is already unbiased, thus negating the need to do counterfactual learning in the first place.
Therefore, for any practical Counterfactual \ac{LTR} scenario, \OurMethod{} is strictly better in terms of convergence rate than naively scaling the gradients with \ac{IPS} weights.

\subsection{Efficiency}
\label{sec:method:efficiency}
Finally, despite the advantages of \OurMethod{} in terms of convergence rate, these benefits may not be useful if they come at the cost of worse computational complexity.
A straightforward but naive implementation for sampling from Equation~\ref{eq:sampling-probability} would result in a $\BigO{Tn}$ time complexity for Algorithm~\ref{alg:sgd:ipssampling}, which is significantly worse than the $\BigO{T}$ complexity obtained by standard SGD approaches such as Algorithm~\ref{alg:sgd:ipsweighted}.

Fortunately, sampling from Equation~\ref{eq:sampling-probability} can be done with an amortized $\BigO{1}$ cost using the alias method~\cite{walker1977efficient,walker1974new}.
To achieve this, there is a one time cost of constructing the alias table, done in $\BigO{n}$. Furthermore, we also need to compute the constant $\bar{M}$ which is also a one time operation of $\BigO{n}$.
We note that both of these steps can take place during data pre-processing and are for most practical implementations easily achieved (e.g., PyTorch~\cite{paszke2017automatic} and Tensorflow~\cite{abadi2016tensorflow} both support efficient sampling from a weighed multinomial distribution using the alias method).
Overall, this means that the complexity of \OurMethod{} is $\BigO{n + T}$, which is acceptable when $n < T$.

\subsection{Illustrative Example}
\begin{figure}
\includegraphics[clip,trim=5mm 6mm 5mm 0mm,width=\columnwidth]{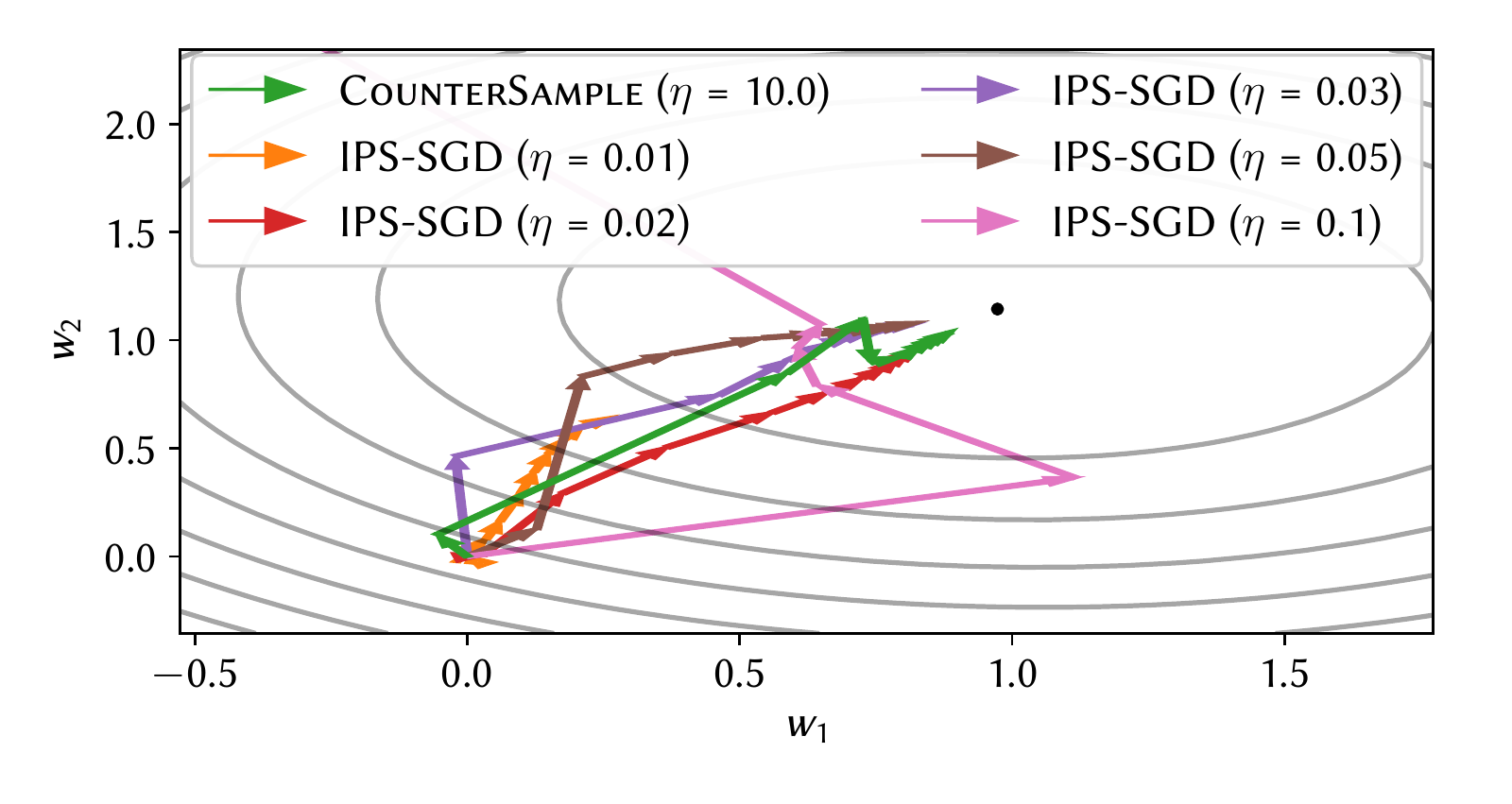}
\caption{Illustration of the convergence of \OurMethod{} versus \ac{IPS}-weighted \ac{SGD} on a synthetic learning example with two weights $w_1$ and $w_2$. The algorithms are run for $T = 50$ iterations. Best viewed in color.}
\Description[Figure shows toy example with 2 weights]{Figure shows toy example with 2 weights. In this example \OurMethod{} gets closer to the optimum after 50 iterations than \IPSSGD{} for varying learning rates.}
\label{fig:toy}
\end{figure}

To illustrate the difference in convergence rates between \OurMethod{} and standard \ac{IPS}-weighted \ac{SGD} we have created a simple toy example learning problem in Figure~\ref{fig:toy}.
We chose two optimal weights \ $\bm{w}^* = [w_1, w_2]$ and synthesize an IPS-weighted regression dataset.\footnote{The synthesized dataset comprises 50 training samples: $\{\bm{x}_1, \ldots, \bm{x}_{50}\}$ where each $\bm{x}_i = [x_{i,1}, x_{i,2}]$ and each $x_{i,j} \sim \mathcal{N}(0, 1)$. We choose $\bm{w}^* = [0.973, 1.144]$ and set targets $y_i = \iprod*{\bm{x}_i, \bm{w}^*}$. For each $\bm{x}_i$ we generate a propensity $p_i \sim \textit{Uniform}(0.05, 1.0)$ and use the IPS-weighted squared loss as our optimization objective: $R_{\textit{IPS}}(\bm{w}) = \frac{1}{50} \sum_{i=1}^{50} \frac{1}{p_i} \rbrackets*{\iprod*{\bm{x}_i, \bm{w}} - y}^2$.}
Notice that, for large learning rates, the IPS-weighted \ac{SGD} approach leads to unstable learning and diverges from the optimum.
The learning rate for \ac{IPS}-weighted \ac{SGD} needs to be small enough to ensure that training samples with large \ac{IPS} weights do not cause too large a step, possibly leading to divergent behavior.
As a result, it is necessary to reduce the learning rate to ensure stable learning, however doing so naturally increases the time until convergence for IPS-weighted SGD because samples that do not have extreme \ac{IPS} weights can only make small progress to the optimum.
On the other hand, \OurMethod{} can reliably handle large learning rates because the gradients are not scaled with potentially large \ac{IPS} weights.
Instead, \OurMethod{} samples training instances with high \ac{IPS} weights more frequently.
Overall, this leads to much faster convergence.

%!TEX root = ../sigir2020.tex

\section{Experimental setup}
\label{sec:setup}

Our experimental setup is aimed at assessing the convergence rate of \OurMethod{} and to answer the following research question:
\begingroup
\addtolength\leftmargini{-0.05in}
\begin{quote}
\emph{Does \OurMethod{}, a sampling-based \ac{SGD} approach, converge faster than \ac{IPS}-weighted \ac{SGD} for \ac{LTR}?}
\end{quote}
\endgroup

\noindent%
We use the standard experimental setup for Counterfactual \ac{LTR}, first described in~\cite{joachims2017unbiased}.
This means that we use a \emph{fully supervised} \ac{LTR} dataset and simulate a biased \emph{click log} according to a position-based user behavior model.

\subsection{Datasets}
\label{sec:setup:datasets}
\begin{table}
\caption{Datasets used for our experiments.}
\label{tbl:datasets}
\begin{tabular}{lrrr}
\toprule
Dataset & Queries & Avg. docs per query &  $\textit{rel}(q, d) = 0$ \\
\midrule
\Yahoo{}~\cite{chapelle2011yahoo} & 36,251 & 23 & 26\% \\
\Istella{}~\cite{lucchese2016post} & 33,118 & 103 & 89\% \\
\bottomrule
\end{tabular}
\end{table}
We use two supervised \ac{LTR} datasets in our experiments: \Yahoo{}~\cite{chapelle2011yahoo} and \Istella{}~\cite{lucchese2016post}.
We choose these two datasets as they complement each other in the number of items per query, which is large for \Istella{} and small for \Yahoo{}, and the sparsity of relevance feedback, which is high for \Istella{} and low for \Yahoo{} (see Table~\ref{tbl:datasets}).

Both \ac{LTR} datasets are collected on a large set of queries $\mathcal{Q} = \cbrackets*{q_1, \ldots, q_m}$.
For each query $q$ the dataset provides a set of candidate items $D_q$, where each item $d \in D_q$ is given in the form of a feature vector $x_{(q, d)}$ representing a query-item pair.
Furthermore, for each query $q$ the relevance grades $\mathit{rel}(q, d)$ are known for all $d \in D_q$.
The relevance grades are scaled from 0 to 4, where 0 indicates no relevance and 4 indicates highly relevant.
We note that this violates the binary relevance assumption made in Section~\ref{sec:background:additive}.
However, as we will see in Section~\ref{sec:setup:simulation}, during click simulation the relevance grades are reduced to binary form which is in line with existing experimental setups for Counterfactual \ac{LTR}~\cite{joachims2017unbiased}.

\subsection{Simulation Setup}
\label{sec:setup:simulation}
We simulate clicks using the setup of~\cite{joachims2017unbiased}.
In this setup, we repeatedly sample a query $q$ uniformly from the dataset.
The candidate items $D_q$ for the sampled query are then sorted by a scoring function $S_0$, called the \emph{logging policy} (see Section~\ref{sec:setup:loggingpolicy} for how $S_0$ is chosen).
The simulation introduces position bias: items that are highly ranked by $S_0$ have a higher probability of being observed and thus clicked.
Furthermore, the simulation has some noise: for every observed item, the probability of it being clicked is 1 if the item is relevant ($\textit{rel}(q, d) \in \{3, 4\}$) and 0.1 if the item is not relevant ($\textit{rel}(q, d) \in \{0, 1, 2\}$).
More formally, for every item $d \in D_q$, clicks are sampled from a Bernoulli distribution with probability:
\begin{equation}
\mbox{}\hspace*{-2mm}
P(c(d) = 1) \ {=} \ \left\{ \mbox{}\hspace*{-1mm}\begin{array}{ll}
P(o(d) \mid q, D_q, S_0) & \mbox{}\hspace*{-2mm}\text{if }\textit{rel}(q, d) \in \{3, 4\}, \\
P(o(d) \mid q, D_q, S_0) \cdot 0.1 & \mbox{}\hspace*{-2mm}\text{if }\textit{rel}(q, d) \in \{0, 1, 2\}, \hspace*{-3mm}\mbox{}\\
\end{array} \right.
\end{equation}
where
\begin{equation}
P(o(d) \mid q, D_q, S_0) = \rbrackets*{\frac{1}{\textit{rank}(d \mid q, D_q, S_0)}}^\gamma,
\end{equation}
and $\gamma \geq 0$ is a parameter controlling the severity of position bias.
The above formulation is identical to the setup used in~\cite{joachims2017unbiased}.
For all our experiments we simulate 1,000,000 clicks.
Unless otherwise specified, we use $\gamma = 1$ as the position bias parameter.
Table~\ref{tbl:propensities} shows the values of $M$ and $\bar{M}$ for the simulated clicks on each of the datasets.
We note that even under mild position bias ($\gamma = 0.5$), there is a significant difference between $M$ and $\bar{M}$.
The difference between these quantities becomes substantially larger as $\gamma$ increases.

Simulating clicks from supervised datasets has several advantages over using existing click logs.
First, by simulating clicks we can explicitly control the severity of position bias (by controlling the value of $\gamma$) and therefore test its impact in a controlled environment.
Second, we can evaluate the learned rankers on the \emph{true relevance labels} as they are provided by the supervised datasets.
This means that we do not have to resort to performance estimation techniques that may be unreliable.

\begin{table}
\caption{$\bar{M}$ and $M$ for varying levels of position bias ($\gamma$).}
\label{tbl:propensities}
%!TEX root = ../sigir2020.tex
\begin{tabular}{l@{\hspace{6mm}}rrrrr}
\toprule
Position bias ($\gamma$): & 0.5 & 0.75 & 1.0 & 1.25 & 1.5 \\
\midrule
\Yahoo{}: $\bar{M}$ & 3.14& 5.12& 7.92& 11.71& 16.66 \\
\Yahoo{}: $M$ & 11.79& 40.70& 129.00& 388.91& 1265.55 \\
\midrule
\Istella{}: $\bar{M}$ & 4.21& 7.42& 12.02& 18.12& 25.94 \\
\Istella{}: $M$ & 13.04& 48.53& 177.00& 645.60& 2081.04 \\
\bottomrule
\end{tabular}
\end{table}

\subsection{Choice of Logging Policy}
\label{sec:setup:loggingpolicy}
We need to build a logging policy $S_0$ that can be used to rank items for our click simulation.
A good candidate logging policy is one that can produce rankings of sufficient quality to generate a useful number of relevant clicks, but not perfectly optimal so that learning can still occur.
To do so we train a linear ranker with full supervision (using the pairwise hinge loss formulation of Equation~\ref{eq:risk}) on 0.1\% of the queries for each of the datasets.
Building a logging policy in this manner represents a realistic deployment scenario: practitioners of \ac{LTR} systems would typically train a ranker on a small amount of manually annotated data before deploying it to collect a large amount of click data.

\subsection{Evaluation}
\label{sec:setup:evaluation}
To measure the performance of the rankers learned by the various algorithms, we use nDCG@10~\cite{jarvelin2002cumulated} on held-out test data.
We denote with $\textit{nDCG@10}(\bm{w})$, the average nDCG@10 on held-out test data when items are ranked using the scoring function $S_{\bm{w}}$.

We are interested in measuring the \emph{convergence rate} of the learning algorithms.
To do so, we measure average regret in terms of nDCG@10 (with respect to the optimal model \ $\bm{w}^*$):
\begin{flalign}
&\textit{Regret}(T) = \frac{1}{T}\sum_{t=1}^{T} \left(\textit{nDCG@10}(\bm{w}^*) - \textit{nDCG@10}\rbrackets*{\bar{\bm{w}}_t}\right),
\label{eq:regret}
\end{flalign}
where $\bar{\bm{w}}_t = \frac{1}{t}\sum_{t^\prime=1}^t \bm{w}_{t^\prime}$ is the learned model after $t$ iterations.
To obtain the gold standard $\bm{w}^*$ we train a linear ranker with full supervision (using the relevance labels of the \ac{LTR} dataset).

Measuring regret should help us confirm our theoretical results about convergence rates since lower values indicate faster convergence to the optimal solution $\bm{w}^*$.
For each of our results we consider statistical significance with a $t$-test ($p < 0.01$).

\subsection{Methods to Compare}
In our experiments we compare the following methods:
\begin{itemize}[leftmargin=*,nosep]
\item \OurMethod{}: our sample-based method (Algorithm~\ref{alg:sgd:ipssampling});
\item \IPSSGD{}: \ac{IPS}-weighted \ac{SGD} (Algorithm~\ref{alg:sgd:ipsweighted})~\cite{joachims2017unbiased,agarwal2019general}; and 
\item \BiasedSGD{}: naive \ac{SGD} without any propensity weighting.
\end{itemize}
We use a linear scoring function for our experiments, as this more closely matches the convexity assumptions made in our analysis:
\begin{equation}
S_{\bm{w}}(q, d) = \iprod*{\bm{w}, x_{q, d}}.
\end{equation}
For each experiment and each method we tune the learning rate $\eta$ to minimize regret (see Section~\ref{sec:setup:evaluation}) on held-out validation data, where we try the following values of $\eta$:
\begin{equation}
\eta \in \{1 \times 10^{-10}, 3 \times 10^{-10}, 1 \times 10^{-9}, \ldots, 1 \times 10^{0}, 3 \times 10^{0}\}.
\end{equation}

%!TEX root = ../sigir2020.tex

\section{Experimental Results}
\label{sec:results}

\subsection{Effect of Optimizer}
\begin{figure*}
\includegraphics[width=\textwidth]{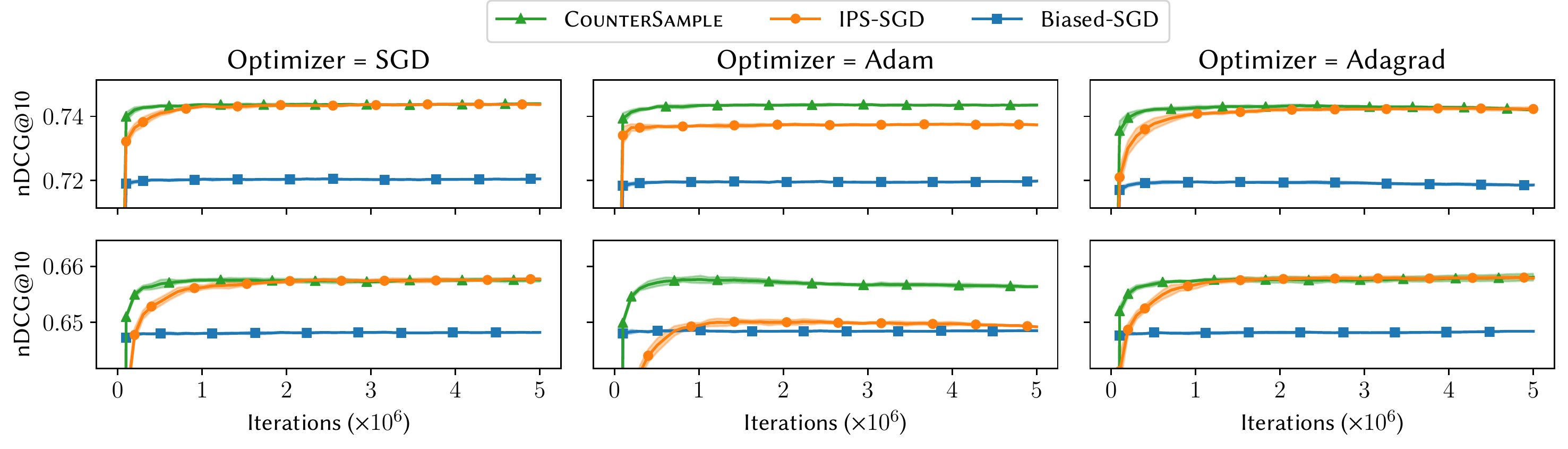}
\caption{The learning performance on held-out test data for different optimizers (top row is \Yahoo{}, bottom row is \Istella{}). \OurMethod{} is significantly faster to converge in all scenarios. For the Adam optimizer, the \ac{IPS}-weighted \ac{SGD} approach produces a suboptimal performance.}
\Description[Figure shows learning curves for various optimizers]{Figure shows learning curves for various optimizers. In all cases \OurMethod{} converges faster than \IPSSGD{}. For the \textsc{Adam} optimizer, \OurMethod{} converges to a higher level of performance than \IPSSGD{}.}
\label{fig:optimizers}
\end{figure*}
\begin{table}
\caption{Average regret ($\times 100$) for different optimizers. Smaller values indicate faster convergence. Statistically significantly lower and higher regret compared to \ac{IPS}-\ac{SGD} is denoted with $\triangledown$ and $\triangle$ respectively.}
\label{tbl:optimizers}
%!TEX root = ../sigir2020.tex
\begin{tabular}{l@{\hspace{6mm}}rrr}
\toprule
Optimizer: & SGD & \textsc{Adam} & \textsc{Adagrad} \\
\midrule
\Yahoo{} \\
\quad\BiasedSGD{} & 2.64\rlap{$^{\triangle}$} & 2.72\rlap{$^{\triangle}$} & 2.76\rlap{$^{\triangle}$} \\
\quad\IPSSGD{} & 0.41 & 0.97 & 0.64 \\
\quad\OurMethod{} & 0.33\rlap{$^{\triangledown}$} & 0.35\rlap{$^{\triangledown}$} & 0.44\rlap{$^{\triangledown}$} \\
\midrule
\Istella{} \\
\quad\BiasedSGD{} & 2.07\rlap{$^{\triangle}$} & 2.04\rlap{$^{\triangledown}$} & 2.06\rlap{$^{\triangle}$} \\
\quad\IPSSGD{} & 1.33 & 2.16 & 1.24 \\
\quad\OurMethod{} & 1.19\rlap{$^{\triangledown}$} & 1.24\rlap{$^{\triangledown}$} & 1.15\rlap{$^{\triangledown}$} \\
\bottomrule
\end{tabular}

\end{table}
First, we investigate the impact of the optimizer on the convergence rate of the different Counterfactual \ac{LTR} approaches.
We consider three commonly used optimization methods: Regular \ac{SGD}, \textsc{Adam}~\cite{kingma2014adam} and \textsc{Adagrad}~\cite{duchi2011adaptive}.
We apply these methods by replacing the update rule in Algorithms~\ref{alg:sgd:ipsweighted} and~\ref{alg:sgd:ipssampling} with either the update rule from \textsc{Adam} or \textsc{Adagrad}.

In Figure~\ref{fig:optimizers} we plot the learning curves on held-out test data for both the \Yahoo{} and \Istella{} dataset.
Interestingly, \IPSSGD{} does not work well with \textsc{Adam}, converging to a suboptimal solution, whereas \OurMethod{} is able to converge to a much higher level of performance.
This result is surprising as both \OurMethod{} and \IPSSGD{} optimize the same unbiased objective.
Recent work has shown that \textsc{Adam} is not guaranteed to converge to the optimal solution for some convex optimization problems and this may in part explain the behavior we observe here~\cite{reddi2019convergence}.
Regardless, we observe that in all cases \OurMethod{} converges significantly faster than \IPSSGD{}.
We note that the naive \BiasedSGD{} approach converges to a lower level of performance, which is as expected since it ignores the impact of position bias.
We confirm these findings by reporting the average regret in Table~\ref{tbl:optimizers}, observing a significantly lower regret for \OurMethod{} than \IPSSGD{}.

Our results indicate that \OurMethod{} is superior to \ac{IPS}-weighting across all optimizers.
We find that regular \ac{SGD} outperforms other optimizers in the majority of cases.
A possible reason for this behavior is that our scoring function is linear.
Optimizers such as \textsc{Adam} and \textsc{Adagrad} may not provide significant benefits over \ac{SGD} when applied to linear functions as opposed to non-linear functions (e.g., deep neural networks).
We leave studying other scoring functions such as neural networks as future work.

\subsection{Impact of Batch Size}
\begin{figure*}
\includegraphics[width=\textwidth]{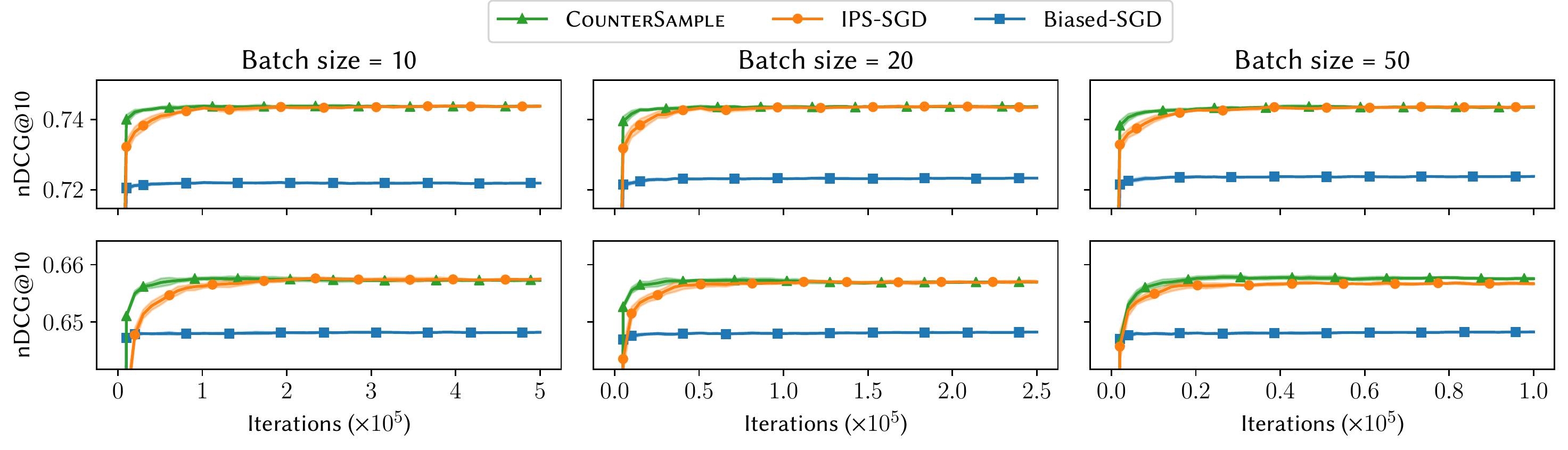}
\caption{The learning performance on held-out test data for different batch sizes (top row is \Yahoo{}, bottom row is \Istella{}). \OurMethod{} is faster to converge in all scenarios, however the differences are less pronounced for larger batch sizes.}
\Description[Figure shows learning curves for various batch sizes]{Figure shows learning curves for various batch sizes. In all cases \OurMethod{} converges faster than \IPSSGD{}. The effects are less pronounced when the batch sizes get larger.}
\label{fig:batchsizes}
\end{figure*}
\begin{table}
\caption{Average regret ($\times 100$) for different batch sizes. Statistical significance is denoted the same as Table~\ref{tbl:optimizers}.}
\label{tbl:batchsizes}
%!TEX root = ../sigir2020.tex
\begin{tabular}{l@{\hspace{6mm}}rrr}
\toprule
Batch size: & 10 & 20 & 50 \\
\midrule
\Yahoo{} \\
\quad\BiasedSGD{} & 2.49\rlap{$^{\triangle}$} & 2.36\rlap{$^{\triangle}$} & 2.31\rlap{$^{\triangle}$} \\
\quad\IPSSGD{} & 0.41 & 0.42 & 0.44 \\
\quad\OurMethod{} & 0.33\rlap{$^{\triangledown}$} & 0.34\rlap{$^{\triangledown}$} & 0.37\rlap{$^{\triangledown}$} \\
\midrule
\Istella{} \\
\quad\BiasedSGD{} & 2.07\rlap{$^{\triangle}$} & 2.08\rlap{$^{\triangle}$} & 2.07\rlap{$^{\triangle}$} \\
\quad\IPSSGD{} & 1.34 & 1.31 & 1.32 \\
\quad\OurMethod{} & 1.20\rlap{$^{\triangledown}$} & 1.21\rlap{$^{\triangledown}$} & 1.21\rlap{$^{\triangledown}$} \\
\bottomrule
\end{tabular}

\end{table}

In this section we investigate the effect of the batch size.
We hypothesize that large batch sizes reduce the variance of individual update steps, as many gradients are averaged in a single update step, and as a result the convergence rate of \OurMethod{} and \ac{IPS}-\ac{SGD} should be comparable.
We try batch sizes 10, 20 and 50.

We plot the learning curves for varying batch sizes in Figure~\ref{fig:batchsizes}.
Once again we find that, unsurprisingly, \BiasedSGD{} converges to a suboptimal solution.
For both datasets we observe that \OurMethod{} is able to converge faster than \IPSSGD{}, regardless of the chosen batch size.
For the \Istella{} dataset, \OurMethod{} is able to converge to a slightly higher level of performance than \IPSSGD{} when using a batch size of 50.
The average regret in Table~\ref{tbl:batchsizes} suggests that the convergence rate of \OurMethod{} is not affected by batch size; it is able to converge faster than \IPSSGD{} in all cases.

\subsection{Severity of Position Bias}
\begin{figure*}
\includegraphics[width=\textwidth]{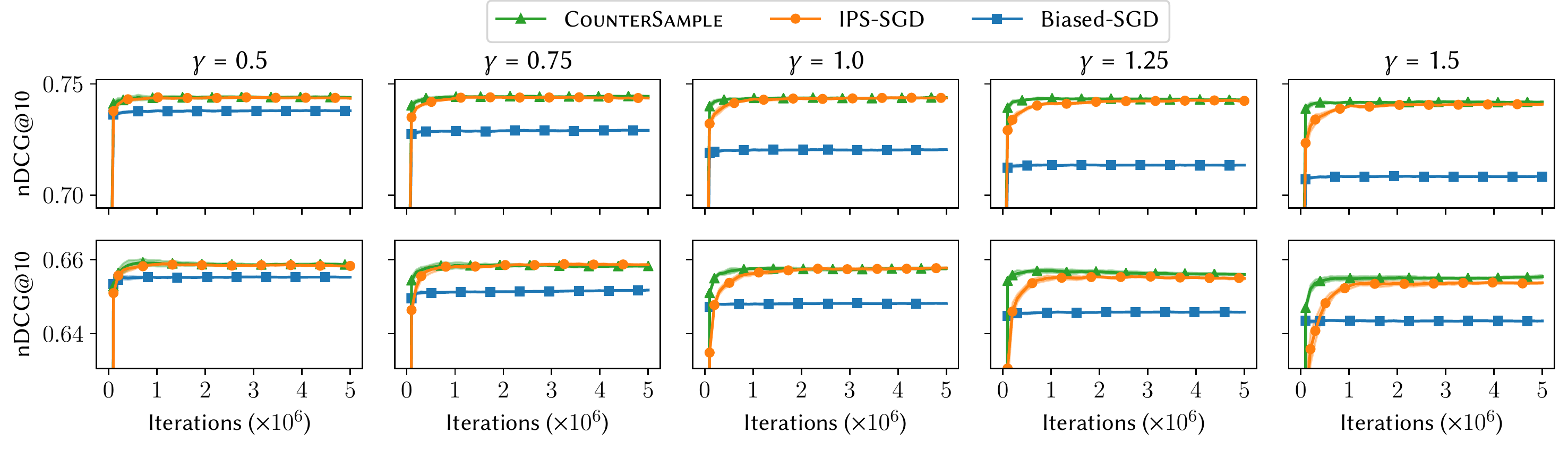}
\caption{The learning performance on held-out test data for varying levels of position bias (top row is \Yahoo{}, bottom row is \Istella{}). \OurMethod{}'s convergence rate is robust to larger values of $\gamma$, whereas \IPSSGD{} suffers when $\gamma$ is large.}
\label{fig:etas}
\Description[Figure shows learning curves for various levels of position bias $\gamma$]{Figure shows learning curves for various levels of position bias $\gamma$. As $\gamma$ increases, the learning curves for \OurMethod{} converges significantly faster than those for \IPSSGD{}.}
\end{figure*}
\begin{table}
\caption{Average regret ($\times 100$) for different levels of position bias $\gamma$. Statistical significance is denoted the same as Table~\ref{tbl:optimizers}.}
\label{tbl:etas}
%!TEX root = ../sigir2020.tex
\begin{tabular}{l@{\hspace{6mm}}rrrrr}
\toprule
Position bias ($\gamma$): & 0.5 & 0.75 & 1.0 & 1.25 & 1.5 \\
\midrule
\Yahoo{} \\
\quad\BiasedSGD{} & 0.89\rlap{$^{\triangle}$} & 1.78\rlap{$^{\triangle}$} & 2.64\rlap{$^{\triangle}$} & 3.32\rlap{$^{\triangle}$} & 3.83\rlap{$^{\triangle}$} \\
\quad\IPSSGD{} & 0.34 & 0.35 & 0.41 & 0.56 & 0.75 \\
\quad\OurMethod{} & 0.30 & 0.27\rlap{$^{\triangledown}$} & 0.33\rlap{$^{\triangledown}$} & 0.38\rlap{$^{\triangledown}$} & 0.51\rlap{$^{\triangledown}$} \\
\midrule
\Istella{} \\
\quad\BiasedSGD{} & 1.37\rlap{$^{\triangle}$} & 1.75\rlap{$^{\triangle}$} & 2.07\rlap{$^{\triangle}$} & 2.30\rlap{$^{\triangle}$} & 2.53\rlap{$^{\triangle}$} \\
\quad\IPSSGD{} & 1.09 & 1.12 & 1.33 & 1.53 & 1.79 \\
\quad\OurMethod{} & 1.05 & 1.08 & 1.19\rlap{$^{\triangledown}$} & 1.26\rlap{$^{\triangledown}$} & 1.44\rlap{$^{\triangledown}$} \\
\bottomrule
\end{tabular}

\end{table}

Finally, we look at the impact of position bias, controlled by the position bias parameter $\gamma$.
Position bias has an effect on the nature of the clicks collected and changes the distribution of propensity scores (see Table~\ref{tbl:propensities}).
For large $\gamma$, the propensity scores will be heavily skewed: the majority of observations and propensities will be on the top-ranked items while lower-ranked items are only very rarely observed and clicked, resulting in more extreme \ac{IPS} weights for those clicks.
Conversely, a small $\gamma$ makes the propensity scores more heavy tailed, generating more observations on lower ranked items and consequently more clicks on those items with less extreme \ac{IPS} weights.
We expect that, as we increase $\gamma$, \OurMethod{} should outperform \IPSSGD{} in terms of convergence rate since in this case $M \gg \bar{M}$.
Conversely, for smaller values of $\gamma$ we expect that the methods perform comparably.

Figure~\ref{fig:etas} provides learning curves for various levels of $\gamma$.
We observe that the performance of \BiasedSGD{} goes up as $\gamma$ goes down, which is in line with our expectations since smaller values of $\gamma$ result in less position bias.
In all cases \IPSSGD{} and \OurMethod{} perform strictly better than \BiasedSGD{}.
The convergence of \OurMethod{} is comparable to \IPSSGD{} for smaller $\gamma$, but as $\gamma$ grows, \OurMethod{} is significantly faster to converge than \IPSSGD{}.
This confirms our expectation that \OurMethod{} is able to reliably handle situations where $M \gg \bar{M}$, i.e. when there are more extreme \ac{IPS} weights.
Table~\ref{tbl:etas} confirms these findings in terms of average regret: for larger values of $\gamma$, \OurMethod{} is able to obtain significantly lower regret than competing approaches.

\subsection{Discussion}
\label{sec:results:discussion}
Finally, we reflect on the research question posed in Section~\ref{sec:setup}: \textit{Does \OurMethod{}, a sampling-based \ac{SGD} approach, converge faster than \ac{IPS}-weighted \ac{SGD} for \ac{LTR}?}
We answer our research question positively: \OurMethod{} consistently converges faster than \IPSSGD{} -- across optimizers, batch sizes and different levels of position bias ($\gamma$).
These findings support the theoretical results obtained in Sections~\ref{sec:analysis} and~\ref{sec:method}.
In some cases, for example when using the \textsc{Adam} optimizer, \OurMethod{} is not only able to converge faster but able to converge to a higher level of performance than \IPSSGD{}.

%!TEX root = ../sigir2020.tex

\section{Conclusion}
\label{sec:conclusion}
We have studied the convergence rate for \acf{SGD} approaches in Counterfactual \acf{LTR}.
A common approach to Counterfactual \ac{LTR} is \ac{IPS}-weighted \ac{SGD}, where the loss or gradients are scaled by \ac{IPS} weights.
We prove that, for \ac{IPS}-weighted \ac{SGD}, the \ac{IPS} weights play an important role in the convergence rate: the time to converge is slowed by a factor $\BigO{M^2}$ where $M$ is the maximum \ac{IPS} weight in the dataset.

To overcome the slow convergence of \ac{IPS}-weighted \ac{SGD} we propose a sample-based Counterfactual \ac{LTR} learning algorithm called \OurMethod{}.
We prove that \OurMethod{} reduce the convergence rate slowdown from $\BigO{M^2}$ to $\BigO{\bar{M}^2}$ where $\bar{M}$ is the average \ac{IPS} weight in the dataset.
When $M \gg \bar{M}$, this improvement leads to significantly faster convergence of the learning algorithm.

We support our findings with extensive experimentation across a number of biased LTR scenarios, comparing \OurMethod{} to \ac{SGD} with and without \ac{IPS} weighting.
In all cases \OurMethod{} is able to converge faster than standard \ac{IPS}-weighted \ac{SGD}.
In some scenarios \OurMethod{} is even able to converge to a better level of performance than \ac{IPS}-weighted \ac{SGD}.

There are several directions for future work:
First, the convexity assumptions made in the analysis may not hold in practice, particularly when implementing deep neural networks.
Showing the convergence rate of \ac{IPS}-weighted \ac{SGD} for non-convex problems remains an open problem.
Second, optimizing an \ac{IPS}-weighted objective is arguably the simplest approach to Counterfactual \ac{LTR} and in future work we would like to consider more sophisticated objectives such as self-normalized \ac{IPS}~\cite{swaminathan2015self} and variance regularization~\cite{swaminathan2015counterfactual}.
Third, our experiments are conducted on click simulations, giving us experimental control to test our hypotheses.
We leave applying \OurMethod{} to large-scale industrial click logs as future work.
Finally, our work assumes that propensity scores are known a priori which is not always realistic.
Robustness against misspecified propensity scores remains an open problem.

\section*{Code and data}
To facilitate reproducibility of our work, we are sharing all resources used in this paper at \href{https://github.com/rjagerman/sigir2020}{http://github.com/rjagerman/sigir2020}.

\begin{acks}
We thank Chang Li, Harrie Oosterhuis and Ilya Markov for helpful discussions and feedback.
We thank the anonymous reviewers for their feedback.
This research was partially supported by the Netherlands Organisation for Scientific Research (NWO) under project nr 612.001.551 and the Innovation Center for AI (ICAI).
All content represents the opinion of the authors, which is not necessarily shared or endorsed by their respective employers and/or sponsors.
\end{acks}

\bibliographystyle{ACM-Reference-Format.bst}
\bibliography{references}

%%% -*-BibTeX-*-
%%% Do NOT edit. File created by BibTeX with style
%%% ACM-Reference-Format-Journals [18-Jan-2012].

\begin{thebibliography}{44}

%%% ====================================================================
%%% NOTE TO THE USER: you can override these defaults by providing
%%% customized versions of any of these macros before the \bibliography
%%% command.  Each of them MUST provide its own final punctuation,
%%% except for \shownote{}, \showDOI{}, and \showURL{}.  The latter two
%%% do not use final punctuation, in order to avoid confusing it with
%%% the Web address.
%%%
%%% To suppress output of a particular field, define its macro to expand
%%% to an empty string, or better, \unskip, like this:
%%%
%%% \newcommand{\showDOI}[1]{\unskip}   % LaTeX syntax
%%%
%%% \def \showDOI #1{\unskip}           % plain TeX syntax
%%%
%%% ====================================================================

\ifx \showCODEN    \undefined \def \showCODEN     #1{\unskip}     \fi
\ifx \showDOI      \undefined \def \showDOI       #1{#1}\fi
\ifx \showISBNx    \undefined \def \showISBNx     #1{\unskip}     \fi
\ifx \showISBNxiii \undefined \def \showISBNxiii  #1{\unskip}     \fi
\ifx \showISSN     \undefined \def \showISSN      #1{\unskip}     \fi
\ifx \showLCCN     \undefined \def \showLCCN      #1{\unskip}     \fi
\ifx \shownote     \undefined \def \shownote      #1{#1}          \fi
\ifx \showarticletitle \undefined \def \showarticletitle #1{#1}   \fi
\ifx \showURL      \undefined \def \showURL       {\relax}        \fi
% The following commands are used for tagged output and should be
% invisible to TeX
\providecommand\bibfield[2]{#2}
\providecommand\bibinfo[2]{#2}
\providecommand\natexlab[1]{#1}
\providecommand\showeprint[2][]{arXiv:#2}

\bibitem[\protect\citeauthoryear{Abadi, Barham, Chen, Chen, Davis, Dean, Devin,
  Ghemawat, Irving, et~al\mbox{.}}{Abadi et~al\mbox{.}}{2016}]%
        {abadi2016tensorflow}
\bibfield{author}{\bibinfo{person}{Mart{\'\i}n Abadi}, \bibinfo{person}{Paul
  Barham}, \bibinfo{person}{Jianmin Chen}, \bibinfo{person}{Zhifeng Chen},
  \bibinfo{person}{Andy Davis}, \bibinfo{person}{Jeffrey Dean},
  \bibinfo{person}{Matthieu Devin}, \bibinfo{person}{Sanjay Ghemawat},
  \bibinfo{person}{Geoffrey Irving}, {et~al\mbox{.}}}
  \bibinfo{year}{2016}\natexlab{}.
\newblock \showarticletitle{TensorFlow: A System for Large-scale Machine
  Learning}. In \bibinfo{booktitle}{\emph{USENIX OSDI}}.
  \bibinfo{pages}{265--283}.
\newblock


\bibitem[\protect\citeauthoryear{Agarwal, Takatsu, Zaitsev, and
  Joachims}{Agarwal et~al\mbox{.}}{2019a}]%
        {agarwal2019general}
\bibfield{author}{\bibinfo{person}{Aman Agarwal}, \bibinfo{person}{Kenta
  Takatsu}, \bibinfo{person}{Ivan Zaitsev}, {and} \bibinfo{person}{Thorsten
  Joachims}.} \bibinfo{year}{2019}\natexlab{a}.
\newblock \showarticletitle{A General Framework for Counterfactual
  Learning-to-Rank}. In \bibinfo{booktitle}{\emph{SIGIR}}.
  \bibinfo{publisher}{ACM}, \bibinfo{pages}{5--14}.
\newblock


\bibitem[\protect\citeauthoryear{Agarwal, Wainwright, Bartlett, and
  Ravikumar}{Agarwal et~al\mbox{.}}{2009}]%
        {agarwal2009information}
\bibfield{author}{\bibinfo{person}{Alekh Agarwal}, \bibinfo{person}{Martin~J
  Wainwright}, \bibinfo{person}{Peter~L Bartlett}, {and}
  \bibinfo{person}{Pradeep~K Ravikumar}.} \bibinfo{year}{2009}\natexlab{}.
\newblock \showarticletitle{Information-theoretic Lower Bounds on the Oracle
  Complexity of Convex Optimization}. In \bibinfo{booktitle}{\emph{NIPS}}.
  \bibinfo{pages}{1--9}.
\newblock


\bibitem[\protect\citeauthoryear{Agarwal, Zaitsev, Wang, Li, Najork, and
  Joachims}{Agarwal et~al\mbox{.}}{2019b}]%
        {agarwal2019estimating}
\bibfield{author}{\bibinfo{person}{Aman Agarwal}, \bibinfo{person}{Ivan
  Zaitsev}, \bibinfo{person}{Xuanhui Wang}, \bibinfo{person}{Cheng Li},
  \bibinfo{person}{Marc Najork}, {and} \bibinfo{person}{Thorsten Joachims}.}
  \bibinfo{year}{2019}\natexlab{b}.
\newblock \showarticletitle{Estimating Position Bias without Intrusive
  Interventions}. In \bibinfo{booktitle}{\emph{WSDM}}. ACM,
  \bibinfo{pages}{474--482}.
\newblock


\bibitem[\protect\citeauthoryear{Ai, Bi, Luo, Guo, and Croft}{Ai
  et~al\mbox{.}}{2018}]%
        {ai2018unbiased}
\bibfield{author}{\bibinfo{person}{Qingyao Ai}, \bibinfo{person}{Keping Bi},
  \bibinfo{person}{Cheng Luo}, \bibinfo{person}{Jiafeng Guo}, {and}
  \bibinfo{person}{W~Bruce Croft}.} \bibinfo{year}{2018}\natexlab{}.
\newblock \showarticletitle{Unbiased Learning to Rank with Unbiased Propensity
  Estimation}. In \bibinfo{booktitle}{\emph{SIGIR}}. ACM,
  \bibinfo{pages}{385--394}.
\newblock


\bibitem[\protect\citeauthoryear{Alain, Lamb, Sankar, Courville, and
  Bengio}{Alain et~al\mbox{.}}{2015}]%
        {alain2015variance}
\bibfield{author}{\bibinfo{person}{Guillaume Alain}, \bibinfo{person}{Alex
  Lamb}, \bibinfo{person}{Chinnadhurai Sankar}, \bibinfo{person}{Aaron
  Courville}, {and} \bibinfo{person}{Yoshua Bengio}.}
  \bibinfo{year}{2015}\natexlab{}.
\newblock \showarticletitle{Variance Reduction in SGD by Distributed Importance
  Sampling}.
\newblock \bibinfo{journal}{\emph{arXiv preprint arXiv:1511.06481}}
  (\bibinfo{year}{2015}).
\newblock


\bibitem[\protect\citeauthoryear{Anwaar, Rybalko, and Kleinsteuber}{Anwaar
  et~al\mbox{.}}{2019}]%
        {anwaar2019counterfactual}
\bibfield{author}{\bibinfo{person}{Muhammad~Umer Anwaar},
  \bibinfo{person}{Dmytro Rybalko}, {and} \bibinfo{person}{Martin
  Kleinsteuber}.} \bibinfo{year}{2019}\natexlab{}.
\newblock \showarticletitle{Counterfactual Learning from Logs for Improved
  Ranking of E-Commerce Products}.
\newblock \bibinfo{journal}{\emph{arXiv preprint arXiv:1907.10409}}
  (\bibinfo{year}{2019}).
\newblock


\bibitem[\protect\citeauthoryear{Bendersky, Wang, Metzler, and
  Najork}{Bendersky et~al\mbox{.}}{2017}]%
        {bendersky2017learning}
\bibfield{author}{\bibinfo{person}{Michael Bendersky}, \bibinfo{person}{Xuanhui
  Wang}, \bibinfo{person}{Donald Metzler}, {and} \bibinfo{person}{Marc
  Najork}.} \bibinfo{year}{2017}\natexlab{}.
\newblock \showarticletitle{Learning from user interactions in personal search
  via attribute parameterization}. In \bibinfo{booktitle}{\emph{WSDM}}. ACM,
  \bibinfo{pages}{791--799}.
\newblock


\bibitem[\protect\citeauthoryear{Carterette and Chandar}{Carterette and
  Chandar}{2018}]%
        {carterette2018offline}
\bibfield{author}{\bibinfo{person}{Ben Carterette} {and}
  \bibinfo{person}{Praveen Chandar}.} \bibinfo{year}{2018}\natexlab{}.
\newblock \showarticletitle{Offline Comparative Evaluation with Incremental,
  Minimally-invasive Online Feedback}. In \bibinfo{booktitle}{\emph{SIGIR}}.
  \bibinfo{publisher}{ACM}, \bibinfo{pages}{705--714}.
\newblock


\bibitem[\protect\citeauthoryear{Chandar and Carterette}{Chandar and
  Carterette}{2018}]%
        {chandar2018estimating}
\bibfield{author}{\bibinfo{person}{Praveen Chandar} {and} \bibinfo{person}{Ben
  Carterette}.} \bibinfo{year}{2018}\natexlab{}.
\newblock \showarticletitle{Estimating Clickthrough Bias in the Cascade Model}.
  In \bibinfo{booktitle}{\emph{CIKM}}. \bibinfo{publisher}{ACM},
  \bibinfo{pages}{1587--1590}.
\newblock


\bibitem[\protect\citeauthoryear{Chapelle and Chang}{Chapelle and
  Chang}{2011}]%
        {chapelle2011yahoo}
\bibfield{author}{\bibinfo{person}{Olivier Chapelle} {and} \bibinfo{person}{Yi
  Chang}.} \bibinfo{year}{2011}\natexlab{}.
\newblock \showarticletitle{Yahoo! Learning to Rank Challenge Overview}. In
  \bibinfo{booktitle}{\emph{Proceedings of the Learning to Rank Challenge}}.
  \bibinfo{pages}{1--24}.
\newblock


\bibitem[\protect\citeauthoryear{Craswell, Zoeter, Taylor, and Ramsey}{Craswell
  et~al\mbox{.}}{2008}]%
        {craswell2008experimental}
\bibfield{author}{\bibinfo{person}{Nick Craswell}, \bibinfo{person}{Onno
  Zoeter}, \bibinfo{person}{Michael Taylor}, {and} \bibinfo{person}{Bill
  Ramsey}.} \bibinfo{year}{2008}\natexlab{}.
\newblock \showarticletitle{An Experimental Comparison of Click Position-bias
  Models}. In \bibinfo{booktitle}{\emph{WSDM}}. \bibinfo{publisher}{ACM},
  \bibinfo{pages}{87--94}.
\newblock


\bibitem[\protect\citeauthoryear{Duchi, Hazan, and Singer}{Duchi
  et~al\mbox{.}}{2011}]%
        {duchi2011adaptive}
\bibfield{author}{\bibinfo{person}{John Duchi}, \bibinfo{person}{Elad Hazan},
  {and} \bibinfo{person}{Yoram Singer}.} \bibinfo{year}{2011}\natexlab{}.
\newblock \showarticletitle{Adaptive Subgradient Methods for Online Learning
  and Stochastic Optimization}.
\newblock \bibinfo{journal}{\emph{JMLR}} \bibinfo{volume}{12},
  \bibinfo{number}{Jul} (\bibinfo{year}{2011}), \bibinfo{pages}{2121--2159}.
\newblock


\bibitem[\protect\citeauthoryear{Fang, Agarwal, and Joachims}{Fang
  et~al\mbox{.}}{2019}]%
        {fang2019intervention}
\bibfield{author}{\bibinfo{person}{Zhichong Fang}, \bibinfo{person}{Aman
  Agarwal}, {and} \bibinfo{person}{Thorsten Joachims}.}
  \bibinfo{year}{2019}\natexlab{}.
\newblock \showarticletitle{Intervention Harvesting for Context-dependent
  Examination-bias Estimation}. In \bibinfo{booktitle}{\emph{SIGIR}}.
  \bibinfo{pages}{825--834}.
\newblock


\bibitem[\protect\citeauthoryear{Grotov and de~Rijke}{Grotov and
  de~Rijke}{2016}]%
        {grotov2016online}
\bibfield{author}{\bibinfo{person}{Artem Grotov} {and} \bibinfo{person}{Maarten
  de Rijke}.} \bibinfo{year}{2016}\natexlab{}.
\newblock \showarticletitle{Online Learning to Rank for Information Retrieval:
  SIGIR 2016 Tutorial}. In \bibinfo{booktitle}{\emph{SIGIR}}. ACM,
  \bibinfo{pages}{1215--1218}.
\newblock


\bibitem[\protect\citeauthoryear{Hasan, Parikh, Singh, and Sundaresan}{Hasan
  et~al\mbox{.}}{2011}]%
        {hasan2011query}
\bibfield{author}{\bibinfo{person}{Mohammad~Al Hasan}, \bibinfo{person}{Nish
  Parikh}, \bibinfo{person}{Gyanit Singh}, {and} \bibinfo{person}{Neel
  Sundaresan}.} \bibinfo{year}{2011}\natexlab{}.
\newblock \showarticletitle{Query Suggestion for E-commerce Sites}. In
  \bibinfo{booktitle}{\emph{WSDM}}. \bibinfo{publisher}{ACM},
  \bibinfo{pages}{765--774}.
\newblock


\bibitem[\protect\citeauthoryear{Hazan, Agarwal, and Kale}{Hazan
  et~al\mbox{.}}{2007}]%
        {hazan2007logarithmic}
\bibfield{author}{\bibinfo{person}{Elad Hazan}, \bibinfo{person}{Amit Agarwal},
  {and} \bibinfo{person}{Satyen Kale}.} \bibinfo{year}{2007}\natexlab{}.
\newblock \showarticletitle{Logarithmic Regret Algorithms for Online Convex
  Optimization}.
\newblock \bibinfo{journal}{\emph{Machine Learning}} \bibinfo{volume}{69},
  \bibinfo{number}{2-3} (\bibinfo{year}{2007}), \bibinfo{pages}{169--192}.
\newblock


\bibitem[\protect\citeauthoryear{Hazan and Kale}{Hazan and Kale}{2014}]%
        {hazan2014beyond}
\bibfield{author}{\bibinfo{person}{Elad Hazan} {and} \bibinfo{person}{Satyen
  Kale}.} \bibinfo{year}{2014}\natexlab{}.
\newblock \showarticletitle{Beyond the Regret Minimization Barrier: Optimal
  Algorithms for Stochastic Strongly-convex Optimization}.
\newblock \bibinfo{journal}{\emph{JMLR}} \bibinfo{volume}{15},
  \bibinfo{number}{1} (\bibinfo{year}{2014}), \bibinfo{pages}{2489--2512}.
\newblock


\bibitem[\protect\citeauthoryear{Jagerman, Oosterhuis, and de~Rijke}{Jagerman
  et~al\mbox{.}}{2019}]%
        {jagerman-2019-model}
\bibfield{author}{\bibinfo{person}{Rolf Jagerman}, \bibinfo{person}{Harrie
  Oosterhuis}, {and} \bibinfo{person}{Maarten de Rijke}.}
  \bibinfo{year}{2019}\natexlab{}.
\newblock \showarticletitle{To Model or to Intervene: A Comparison of
  Counterfactual and Online Learning to Rank from User Interactions}. In
  \bibinfo{booktitle}{\emph{SIGIR}}. \bibinfo{publisher}{ACM},
  \bibinfo{pages}{15--24}.
\newblock


\bibitem[\protect\citeauthoryear{J{\"a}rvelin and
  Kek{\"a}l{\"a}inen}{J{\"a}rvelin and Kek{\"a}l{\"a}inen}{2002}]%
        {jarvelin2002cumulated}
\bibfield{author}{\bibinfo{person}{Kalervo J{\"a}rvelin} {and}
  \bibinfo{person}{Jaana Kek{\"a}l{\"a}inen}.} \bibinfo{year}{2002}\natexlab{}.
\newblock \showarticletitle{Cumulated Gain-based Evaluation of IR Techniques}.
\newblock \bibinfo{journal}{\emph{TOIS}} \bibinfo{volume}{20},
  \bibinfo{number}{4} (\bibinfo{year}{2002}), \bibinfo{pages}{422--446}.
\newblock


\bibitem[\protect\citeauthoryear{Joachims}{Joachims}{2002}]%
        {joachims2002optimizing}
\bibfield{author}{\bibinfo{person}{Thorsten Joachims}.}
  \bibinfo{year}{2002}\natexlab{}.
\newblock \showarticletitle{Optimizing Search Engines Using Clickthrough Data}.
  In \bibinfo{booktitle}{\emph{KDD}}. ACM, \bibinfo{pages}{133--142}.
\newblock


\bibitem[\protect\citeauthoryear{Joachims, Granka, Pan, Hembrooke, and
  Gay}{Joachims et~al\mbox{.}}{2017a}]%
        {joachims2017accurately}
\bibfield{author}{\bibinfo{person}{Thorsten Joachims}, \bibinfo{person}{Laura
  Granka}, \bibinfo{person}{Bing Pan}, \bibinfo{person}{Helene Hembrooke},
  {and} \bibinfo{person}{Geri Gay}.} \bibinfo{year}{2017}\natexlab{a}.
\newblock \showarticletitle{Accurately Interpreting Clickthrough Data as
  Implicit Feedback}. In \bibinfo{booktitle}{\emph{SIGIR Forum}},
  Vol.~\bibinfo{volume}{51}. Acm New York, NY, USA, \bibinfo{pages}{4--11}.
\newblock


\bibitem[\protect\citeauthoryear{Joachims, Swaminathan, and de~Rijke}{Joachims
  et~al\mbox{.}}{2018}]%
        {joachims2018deep}
\bibfield{author}{\bibinfo{person}{Thorsten Joachims}, \bibinfo{person}{Adith
  Swaminathan}, {and} \bibinfo{person}{Maarten de Rijke}.}
  \bibinfo{year}{2018}\natexlab{}.
\newblock \showarticletitle{Deep Learning with Logged Bandit Feedback}. In
  \bibinfo{booktitle}{\emph{ICLR}}.
\newblock


\bibitem[\protect\citeauthoryear{Joachims, Swaminathan, and Schnabel}{Joachims
  et~al\mbox{.}}{2017b}]%
        {joachims2017unbiased}
\bibfield{author}{\bibinfo{person}{Thorsten Joachims}, \bibinfo{person}{Adith
  Swaminathan}, {and} \bibinfo{person}{Tobias Schnabel}.}
  \bibinfo{year}{2017}\natexlab{b}.
\newblock \showarticletitle{Unbiased Learning-to-Rank with Biased Feedback}. In
  \bibinfo{booktitle}{\emph{WSDM}}. \bibinfo{publisher}{ACM},
  \bibinfo{pages}{781--789}.
\newblock


\bibitem[\protect\citeauthoryear{Katharopoulos and Fleuret}{Katharopoulos and
  Fleuret}{2018}]%
        {katharopoulos2018not}
\bibfield{author}{\bibinfo{person}{Angelos Katharopoulos} {and}
  \bibinfo{person}{Fran{\c{c}}ois Fleuret}.} \bibinfo{year}{2018}\natexlab{}.
\newblock \showarticletitle{Not All Samples are Created Equal: Deep Learning
  with Importance Sampling}.
\newblock \bibinfo{journal}{\emph{arXiv preprint arXiv:1803.00942}}
  (\bibinfo{year}{2018}).
\newblock


\bibitem[\protect\citeauthoryear{Kingma and Ba}{Kingma and Ba}{2014}]%
        {kingma2014adam}
\bibfield{author}{\bibinfo{person}{Diederik~P Kingma} {and}
  \bibinfo{person}{Jimmy Ba}.} \bibinfo{year}{2014}\natexlab{}.
\newblock \showarticletitle{Adam: A Method for Stochastic Optimization}.
\newblock \bibinfo{journal}{\emph{arXiv preprint arXiv:1412.6980}}
  (\bibinfo{year}{2014}).
\newblock


\bibitem[\protect\citeauthoryear{Liu}{Liu}{2009}]%
        {liu2009learning}
\bibfield{author}{\bibinfo{person}{Tie-Yan Liu}.}
  \bibinfo{year}{2009}\natexlab{}.
\newblock \showarticletitle{Learning to Rank for Information Retrieval}.
\newblock \bibinfo{journal}{\emph{Foundations and Trends in Information
  Retrieval}} \bibinfo{volume}{3}, \bibinfo{number}{3} (\bibinfo{year}{2009}),
  \bibinfo{pages}{225--331}.
\newblock


\bibitem[\protect\citeauthoryear{Lucchese, Nardini, Orlando, Perego, Silvestri,
  and Trani}{Lucchese et~al\mbox{.}}{2016}]%
        {lucchese2016post}
\bibfield{author}{\bibinfo{person}{Claudio Lucchese},
  \bibinfo{person}{Franco~Maria Nardini}, \bibinfo{person}{Salvatore Orlando},
  \bibinfo{person}{Raffaele Perego}, \bibinfo{person}{Fabrizio Silvestri},
  {and} \bibinfo{person}{Salvatore Trani}.} \bibinfo{year}{2016}\natexlab{}.
\newblock \showarticletitle{Post-learning Optimization of Tree Ensembles for
  Efficient Ranking}. In \bibinfo{booktitle}{\emph{SIGIR}}. ACM,
  \bibinfo{pages}{949--952}.
\newblock


\bibitem[\protect\citeauthoryear{Needell, Ward, and Srebro}{Needell
  et~al\mbox{.}}{2014}]%
        {needell2014stochastic}
\bibfield{author}{\bibinfo{person}{Deanna Needell}, \bibinfo{person}{Rachel
  Ward}, {and} \bibinfo{person}{Nati Srebro}.} \bibinfo{year}{2014}\natexlab{}.
\newblock \showarticletitle{Stochastic Gradient Descent, Weighted Sampling, and
  the Randomized Kaczmarz Algorithm}. In \bibinfo{booktitle}{\emph{NIPS}}.
  \bibinfo{pages}{1017--1025}.
\newblock


\bibitem[\protect\citeauthoryear{Paszke, Gross, Chintala, Chanan, Yang, DeVito,
  Lin, Desmaison, Antiga, and Lerer}{Paszke et~al\mbox{.}}{2017}]%
        {paszke2017automatic}
\bibfield{author}{\bibinfo{person}{Adam Paszke}, \bibinfo{person}{Sam Gross},
  \bibinfo{person}{Soumith Chintala}, \bibinfo{person}{Gregory Chanan},
  \bibinfo{person}{Edward Yang}, \bibinfo{person}{Zachary DeVito},
  \bibinfo{person}{Zeming Lin}, \bibinfo{person}{Alban Desmaison},
  \bibinfo{person}{Luca Antiga}, {and} \bibinfo{person}{Adam Lerer}.}
  \bibinfo{year}{2017}\natexlab{}.
\newblock \showarticletitle{Automatic Differentiation in PyTorch}. In
  \bibinfo{booktitle}{\emph{NIPS}}.
\newblock


\bibitem[\protect\citeauthoryear{Rakhlin, Shamir, and Sridharan}{Rakhlin
  et~al\mbox{.}}{2011}]%
        {rakhlin2011making}
\bibfield{author}{\bibinfo{person}{Alexander Rakhlin}, \bibinfo{person}{Ohad
  Shamir}, {and} \bibinfo{person}{Karthik Sridharan}.}
  \bibinfo{year}{2011}\natexlab{}.
\newblock \showarticletitle{Making Gradient Descent Optimal for Strongly Convex
  Stochastic Optimization}.
\newblock \bibinfo{journal}{\emph{arXiv preprint arXiv:1109.5647}}
  (\bibinfo{year}{2011}).
\newblock


\bibitem[\protect\citeauthoryear{Reddi, Kale, and Kumar}{Reddi
  et~al\mbox{.}}{2019}]%
        {reddi2019convergence}
\bibfield{author}{\bibinfo{person}{Sashank~J Reddi}, \bibinfo{person}{Satyen
  Kale}, {and} \bibinfo{person}{Sanjiv Kumar}.}
  \bibinfo{year}{2019}\natexlab{}.
\newblock \showarticletitle{On the Convergence of Adam and Beyond}.
\newblock \bibinfo{journal}{\emph{arXiv preprint arXiv:1904.09237}}
  (\bibinfo{year}{2019}).
\newblock


\bibitem[\protect\citeauthoryear{Robbins and Monro}{Robbins and Monro}{1951}]%
        {robbins1951stochastic}
\bibfield{author}{\bibinfo{person}{Herbert Robbins} {and}
  \bibinfo{person}{Sutton Monro}.} \bibinfo{year}{1951}\natexlab{}.
\newblock \showarticletitle{A Stochastic Approximation Method}.
\newblock \bibinfo{journal}{\emph{The Annals of Mathematical Statistics}}
  (\bibinfo{year}{1951}), \bibinfo{pages}{400--407}.
\newblock


\bibitem[\protect\citeauthoryear{Rudin}{Rudin}{1987}]%
        {rudin-1987-real}
\bibfield{author}{\bibinfo{person}{Walter Rudin}.}
  \bibinfo{year}{1987}\natexlab{}.
\newblock \bibinfo{booktitle}{\emph{Real and Complex Analysis}}.
\newblock \bibinfo{publisher}{McGraw-Hill}.
\newblock


\bibitem[\protect\citeauthoryear{Shalev-Shwartz and Ben-David}{Shalev-Shwartz
  and Ben-David}{2014}]%
        {shalev2014understanding}
\bibfield{author}{\bibinfo{person}{Shai Shalev-Shwartz} {and}
  \bibinfo{person}{Shai Ben-David}.} \bibinfo{year}{2014}\natexlab{}.
\newblock \bibinfo{booktitle}{\emph{Understanding Machine Learning: From Theory
  to Algorithms}}.
\newblock \bibinfo{publisher}{Cambridge University Press}.
\newblock


\bibitem[\protect\citeauthoryear{Shamir and Zhang}{Shamir and Zhang}{2013}]%
        {shamir2013stochastic}
\bibfield{author}{\bibinfo{person}{Ohad Shamir} {and} \bibinfo{person}{Tong
  Zhang}.} \bibinfo{year}{2013}\natexlab{}.
\newblock \showarticletitle{Stochastic Gradient Descent for Non-smooth
  Optimization: Convergence Results and Optimal Averaging Schemes}. In
  \bibinfo{booktitle}{\emph{ICML}}. \bibinfo{pages}{71--79}.
\newblock


\bibitem[\protect\citeauthoryear{Swaminathan and Joachims}{Swaminathan and
  Joachims}{2015a}]%
        {swaminathan2015batch}
\bibfield{author}{\bibinfo{person}{Adith Swaminathan} {and}
  \bibinfo{person}{Thorsten Joachims}.} \bibinfo{year}{2015}\natexlab{a}.
\newblock \showarticletitle{Batch Learning from Logged Bandit Feedback Through
  Counterfactual Risk Minimization}.
\newblock \bibinfo{journal}{\emph{JMLR}} \bibinfo{volume}{16},
  \bibinfo{number}{1} (\bibinfo{year}{2015}), \bibinfo{pages}{1731--1755}.
\newblock


\bibitem[\protect\citeauthoryear{Swaminathan and Joachims}{Swaminathan and
  Joachims}{2015b}]%
        {swaminathan2015counterfactual}
\bibfield{author}{\bibinfo{person}{Adith Swaminathan} {and}
  \bibinfo{person}{Thorsten Joachims}.} \bibinfo{year}{2015}\natexlab{b}.
\newblock \showarticletitle{Counterfactual Risk Minimization: Learning from
  Logged Bandit Feedback}. In \bibinfo{booktitle}{\emph{ICML}}.
  \bibinfo{pages}{814--823}.
\newblock


\bibitem[\protect\citeauthoryear{Swaminathan and Joachims}{Swaminathan and
  Joachims}{2015c}]%
        {swaminathan2015self}
\bibfield{author}{\bibinfo{person}{Adith Swaminathan} {and}
  \bibinfo{person}{Thorsten Joachims}.} \bibinfo{year}{2015}\natexlab{c}.
\newblock \showarticletitle{The Self-normalized Estimator for Counterfactual
  Learning}. In \bibinfo{booktitle}{\emph{NIPS}}. \bibinfo{pages}{3231--3239}.
\newblock


\bibitem[\protect\citeauthoryear{Walker}{Walker}{1974}]%
        {walker1974new}
\bibfield{author}{\bibinfo{person}{Alastair~J Walker}.}
  \bibinfo{year}{1974}\natexlab{}.
\newblock \showarticletitle{New Fast Method for Generating Discrete Random
  Numbers with Arbitrary Frequency Distributions}.
\newblock \bibinfo{journal}{\emph{Electronics Letters}} \bibinfo{volume}{10},
  \bibinfo{number}{8} (\bibinfo{year}{1974}), \bibinfo{pages}{127--128}.
\newblock


\bibitem[\protect\citeauthoryear{Walker}{Walker}{1977}]%
        {walker1977efficient}
\bibfield{author}{\bibinfo{person}{Alastair~J Walker}.}
  \bibinfo{year}{1977}\natexlab{}.
\newblock \showarticletitle{An Efficient Method for Generating Discrete Random
  Variables with General Distributions}.
\newblock \bibinfo{journal}{\emph{TOMS}} \bibinfo{volume}{3},
  \bibinfo{number}{3} (\bibinfo{year}{1977}), \bibinfo{pages}{253--256}.
\newblock


\bibitem[\protect\citeauthoryear{Wang, Bendersky, Metzler, and Najork}{Wang
  et~al\mbox{.}}{2016}]%
        {wang2016learning}
\bibfield{author}{\bibinfo{person}{Xuanhui Wang}, \bibinfo{person}{Michael
  Bendersky}, \bibinfo{person}{Donald Metzler}, {and} \bibinfo{person}{Marc
  Najork}.} \bibinfo{year}{2016}\natexlab{}.
\newblock \showarticletitle{Learning to Rank with Selection Bias in Personal
  Search}. In \bibinfo{booktitle}{\emph{SIGIR}}. \bibinfo{publisher}{ACM},
  \bibinfo{pages}{115--124}.
\newblock


\bibitem[\protect\citeauthoryear{Wang, Golbandi, Bendersky, Metzler, and
  Najork}{Wang et~al\mbox{.}}{2018}]%
        {wang2018position}
\bibfield{author}{\bibinfo{person}{Xuanhui Wang}, \bibinfo{person}{Nadav
  Golbandi}, \bibinfo{person}{Michael Bendersky}, \bibinfo{person}{Donald
  Metzler}, {and} \bibinfo{person}{Marc Najork}.}
  \bibinfo{year}{2018}\natexlab{}.
\newblock \showarticletitle{Position Bias Estimation for Unbiased Learning to
  Rank in Personal Search}. In \bibinfo{booktitle}{\emph{WSDM}}.
  \bibinfo{publisher}{ACM}, \bibinfo{pages}{610--618}.
\newblock


\bibitem[\protect\citeauthoryear{Zhao and Zhang}{Zhao and Zhang}{2015}]%
        {zhao2015stochastic}
\bibfield{author}{\bibinfo{person}{Peilin Zhao} {and} \bibinfo{person}{Tong
  Zhang}.} \bibinfo{year}{2015}\natexlab{}.
\newblock \showarticletitle{Stochastic Optimization with Importance Sampling
  for Regularized Loss Minimization}. In \bibinfo{booktitle}{\emph{ICML}}.
  \bibinfo{pages}{1--9}.
\newblock


\end{thebibliography}

\end{document}